\documentclass[11pt]{article}
\newcommand{\PublicationLicenseNotice}{%
  Submitted to arXiv under the arXiv.org perpetual, non-exclusive license 1.0.%
}

\usepackage[margin=1in]{geometry}
\usepackage{amsmath,amssymb,amsthm,mathtools,tabularx}
\usepackage{booktabs,longtable,array,enumitem,adjustbox}
\usepackage{xcolor}
\usepackage{hyperref}
\usepackage{listings}
\usepackage{microtype}
\usepackage{tikz}
\usepackage{stmaryrd}

\usetikzlibrary{positioning, arrows.meta, calc}
\definecolor{enclavebg}{HTML}{F0F4FF}
\definecolor{enclavedraw}{HTML}{2563EB}
\definecolor{teebg}{HTML}{F0FDF4}
\definecolor{teedraw}{HTML}{16A34A}
\definecolor{cohomologybg}{HTML}{FAF5FF}
\definecolor{cohomologydraw}{HTML}{9333EA}
\definecolor{commitbg}{HTML}{ECFDF5}
\definecolor{commitdraw}{HTML}{059669}
\definecolor{quarantinebg}{HTML}{FEF2F2}
\definecolor{quarantinedraw}{HTML}{DC2626}
\definecolor{textcolor}{HTML}{0F172A}

\tikzset{
  enclave/.style={
    rectangle,
    draw=enclavedraw,
    fill=enclavebg,
    rounded corners=6pt,
    line width=1.2pt,
    text width=8.6cm,
    align=center,
    inner sep=10pt,
    font=\sffamily\color{textcolor}
  },
  tee/.style={
    rectangle,
    draw=teedraw,
    fill=teebg,
    rounded corners=6pt,
    line width=1.2pt,
    text width=9.4cm,
    align=left,
    inner sep=10pt,
    font=\sffamily\color{textcolor}
  },
  cohomology/.style={
    rectangle,
    draw=cohomologydraw,
    fill=cohomologybg,
    rounded corners=6pt,
    line width=1.2pt,
    text width=9.4cm,
    align=left,
    inner sep=10pt,
    font=\sffamily\color{textcolor}
  },
  commit/.style={
    rectangle,
    draw=commitdraw,
    fill=commitbg,
    rounded corners=6pt,
    line width=1.2pt,
    text width=4.6cm,
    align=left,
    inner sep=10pt,
    font=\sffamily\color{textcolor}
  },
  quarantine/.style={
    rectangle,
    draw=quarantinedraw,
    fill=quarantinebg,
    rounded corners=6pt,
    line width=1.2pt,
    text width=4.8cm,
    align=left,
    inner sep=10pt,
    font=\sffamily\color{textcolor}
  },
  arrow/.style={
    ->,
    >=Stealth,
    line width=1.2pt,
    draw=textcolor!80
  },
  line/.style={
    line width=1.2pt,
    draw=textcolor!80
  },
  labelnode/.style={
    font=\sffamily\scriptsize,
    text=textcolor,
    align=center,
    fill=white,
    inner sep=3pt
  }
}

\hypersetup{colorlinks=true,linkcolor=blue,citecolor=blue,urlcolor=blue}
\setlist[itemize]{leftmargin=1.4em}
\setlist[enumerate]{leftmargin=1.6em}

\newtheorem{definition}{Definition}[section]
\newtheorem{theorem}{Theorem}[section]
\newtheorem{proposition}{Proposition}[section]

\newcommand{\sem}[1]{[\![#1]\!]}

\lstdefinelanguage{Julia}{
  morekeywords={abstract,break,case,catch,const,continue,do,else,elseif,end,export,false,for,function,global,if,import,importall,in,macro,module,local,mutable,struct,primitive,quote,return,true,try,type,typealias,using,while},
  sensitive=true,
  morecomment=[l]{\#},
  morecomment=[s]{\#=}{=\#},
  morestring=[b]",
  morestring=[m]'
}

\lstdefinestyle{juliastyle}{
  basicstyle=\ttfamily\scriptsize,
  breaklines=true,
  columns=fullflexible,
  frame=single,
  showstringspaces=false,
  keywordstyle=\color{blue!60!black},
  commentstyle=\color{green!40!black},
  stringstyle=\color{purple!60!black}
}
\title{Logit-Boundary Geometric Belief Interfaces and Sparse Sheaf-Enclave Protocols:\\A Self-Contained Substrate for Secure Network Electronic Health Record (EHR) Interoperability}
\author{Alvin Spivey \& Yu Huang\\Light Imaging Technologies, Inc.\\\texttt{alvin@lightimagingtech.com}}
\date{June 30, 2026\\Revised August 10, 2026}

\begin{document}
\maketitle

\begin{center}
\begin{minipage}{0.92\textwidth}
\small\centering
\PublicationLicenseNotice\\[3pt]
\textit{Research manuscript. Not a clinical decision system, medical device, or authorization for autonomous EHR write-back.}
\end{minipage}
\end{center}
\medskip

\begin{abstract}
Electronic health-record interoperability is a boundary problem: legacy systems, generative models, terminology services, identity systems, and human reviewers may each expose rich internal states, while operational exchange requires a narrow shared interface of typed claims, bounded uncertainty, provenance, and explicit admission or abstention. This paper develops a mathematical and engineering architecture for that interface. The organizing idea is the \emph{logit boundary}: a discovery model may propose pre-threshold scores over a local categorical decision, but a deterministic judgment substrate decides whether the proposal is admissible, requires review, or must be quarantined before any Fast Healthcare Interoperability Resources (FHIR) transaction is constructed. The resulting Geometric Belief Interface (GBI) combines finite boundary semantics, local Dirichlet evidence, cellular-sheaf and mapping-cone diagnostics, advisory geometric audit charts, and a Decentralized Cryptographic Sheaf-Enclave (DCSE) protocol sketch for fail-closed deployment. The framework does not establish clinical truth, global representation alignment, or end-to-end clinical safety; it defines certificate-producing checks at a model-to-system boundary. A companion frozen synthetic benchmark, GBI BoundaryBench v0.1, evaluated Qwen3-4B-Instruct-2507 on 256 held-out tasks across three evidence modes (768 canonical executions). All executions completed, but none produced an output accepted by the benchmark contract: 369 were rejected during safe parsing and 399 during schema validation, yielding zero coverage and deterministic quarantine. This empirical result is deliberately narrow---one 4B open-weight model under one frozen interface---and is reported as evidence about the admission boundary, not as a general claim about LLM capability or clinical safety. A Julia appendix verifies selected numerical certificates using standard libraries.  For codebase, reference \url{https://github.com/AlvinSpivey/GBI-BoundaryBench}.
\end{abstract}

\tableofcontents

\section{The organizing picture: logits as the clinical boundary}

A neural model does not usually expose its full internal state. Open-weight inference can expose full pre-threshold logits, while hosted APIs may expose only top-$k$ log probabilities or the generated output. We use the \emph{logit boundary} as the conceptual output boundary and require the receipt to record exactly which evidence was actually available; unavailable logits are never reconstructed or treated as observed. When full logits are available, they are pre-threshold scores assigned to a finite set of possible outputs. For a clinical parser, those outputs might be
\[
\{\texttt{exact},\texttt{equivalent},\texttt{narrower},\texttt{broader},\texttt{conflict},\texttt{unmapped}\}.
\]
For an allergy classifier, they might be
\[
\{\texttt{confirmed-active},\texttt{unconfirmed},\texttt{refuted},\texttt{historical-resolved}\}.
\]
The logit vector is not a fact. It is a numerical proposal over a local categorical boundary. The architecture in this paper treats it exactly that way.

The practical division of labor is:
\[
\boxed{\text{untrusted discovery model}}\quad\longrightarrow\quad
\boxed{\text{logit receipt}}\quad\longrightarrow\quad
\boxed{\text{deterministic judgment engine}}.
\]
The discovery model may be an LLM, a rules parser, an embedding model, or a specialist clinical classifier. It proposes local categorical scores. The judgment engine checks identity, terminology version, provenance, temporal scope, sheaf consistency, and policy. Only then may the system update evidence or construct a FHIR transaction.

This logit-boundary language makes the architecture easier to communicate. It gives neural, symbolic, and distributed components a common interface:
\[
\text{finite categories} + \text{scores} + \text{calibration} + \text{provenance}.
\]
The topology and geometry enter after this boundary is fixed. They are not asked to prove clinical truth by themselves.

\subsection{Reader map}

The mathematical stack is as follows:
\[
\begin{array}{ll}
\text{logit topology} & \text{how neural outputs become finite boundary evidence},\\
\text{finite semantics} & \text{how local clinical propositions are typed},\\
\text{Dirichlet evidence} & \text{how uncertainty is calibrated over local categories},\\
\text{sheaf diagnostics} & \text{how local claims fail to glue globally},\\
\text{geometric charts} & \text{how high-risk contradictions are visualized, not decided},\\
\text{DCSE protocols} & \text{how identity and audit certificates are replicated safely},\\
\text{FHIR application} & \text{how the whole system behaves in EHR interoperability}.
\end{array}
\]

\section{Logit topology for bounded clinical discovery}

The Universal Representation Hypothesis motivates testing whether diverse neural architectures trained on identical or overlapping data distributions exhibit
comparable task-facing structure; the framework here does not assume global manifold congruence. Even when models share task and data constraints, their internal geometry is not
identifiable from output agreement alone.

At the logit boundary, downstream behavioral equivalence is treated as an observable interface condition. Identical correct predictions establish agreement only of
selected outputs; exact pre-threshold-logit agreement is the stronger condition defined below. Neither condition alone implies homeomorphic or diffeomorphic alignment
of hidden representations, preservation of distances or neighborhoods, or a shared semantic hierarchy. The bounded claim used here is therefore only that models can be
compared through the output quantities they expose on a specified stimulus set.

\subsection{Exact logit equivalence and probe-visible quotients}

Let models $A$ and $B$ have hidden spaces $\mathcal H_A=\mathbb R^{D_A}$ and $\mathcal H_B=\mathbb R^{D_B}$ with affine readouts $(W_A,b_A)$ and $(W_B,b_B)$. The comparison below is intentionally restricted to an evaluated stimulus set $\mathcal X$.

\begin{definition}[Exact logit equivalence]
Models $A$ and $B$ are exactly logit-equivalent on $\mathcal X$ if
\[
W_Ah_A(x)+b_A=W_Bh_B(x)+b_B,
\qquad \forall x\in\mathcal X.
\]
This is stronger than agreement of argmax predictions or softmax probabilities.
\end{definition}

\begin{theorem}[Affine reconstruction on the evaluated set]
If the two models are exactly logit-equivalent on $\mathcal X$ and $W_B\in\mathbb R^{M\times D_B}$ has full column rank, then
\[
h_B(x)=W_B^+W_Ah_A(x)+W_B^+(b_A-b_B),
\qquad x\in\mathcal X,
\]
where $W_B^+=(W_B^\top W_B)^{-1}W_B^\top$.
\end{theorem}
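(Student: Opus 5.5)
The plan is to solve the defining equation of exact logit equivalence for $h_B(x)$ by left-multiplying with a left inverse of $W_B$. First I will confirm that $W_B^+=(W_B^\top W_B)^{-1}W_B^\top$ is well defined. Since $W_B\in\mathbb R^{M\times D_B}$ has full column rank, $W_Bv=0$ forces $v=0$. Then for any $v\neq 0$ we have $v^\top W_B^\top W_B v=\|W_Bv\|^2>0$, so the Gram matrix $W_B^\top W_B\in\mathbb R^{D_B\times D_B}$ is positive definite and hence invertible. This immediately gives the left-inverse identity
\[
W_B^+W_B=(W_B^\top W_B)^{-1}W_B^\top W_B=I_{D_B}.
\]

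Next, fix $x\in\mathcal X$. I will rearrange the equivalence condition as $W_Bh_B(x)=W_Ah_A(x)+b_A-b_B$, which uses only the definition of exact logit equivalence. Applying $W_B^+$ to both sides and invoking $W_B^+W_B=I_{D_B}$ gives
\[
h_B(x)=W_B^+W_Ah_A(x)+W_B^+(b_A-b_B).
\]
Since $x\in\mathcal X$ was arbitrary, the identity holds on the whole evaluated set.

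There is no genuine obstacle here. The only point needing care is the direction of the pseudoinverse. Full column rank gives a left inverse, which is exactly what recovering $h_B$ requires. No claim is made that $W_BW_B^+=I$; that would generally fail when $M>D_B$. Consistency of the right-hand side is not an issue either, because the equivalence hypothesis already places $W_Ah_A(x)+b_A-b_B$ in the column space of $W_B$.

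I would close by stressing that the conclusion is a pointwise statement restricted to $\mathcal X$. It is an affine map between hidden vectors, valid only on the stimuli actually evaluated, and it says nothing about alignment of the hidden spaces off that set.
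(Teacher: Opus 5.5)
Your proof is correct and is the same as the paper's: rearrange the equivalence condition to $W_Bh_B(x)=W_Ah_A(x)+b_A-b_B$, then left-multiply by $W_B^+$ and use $W_B^+W_B=I_{D_B}$. Your additional argument that full column rank makes $W_B^\top W_B$ positive definite, and hence invertible, supplies a detail the paper leaves implicit.
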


\begin{proof}
Exact logit equivalence gives
\[
W_Bh_B(x)=W_Ah_A(x)+b_A-b_B.
\]
Multiplication by the left inverse $W_B^+$ yields the stated reconstruction because $W_B^+W_B=I_{D_B}$.
\end{proof}

The theorem is a statement about the observed stimulus set. It does not imply that the induced linear map is invertible, that it extends uniquely beyond $\mathcal X$, or that hidden-state distances, neighborhoods, topology, or semantic organization are preserved.

\begin{definition}[Probe-visible quotient]
For a linear probe family $V\subseteq\mathcal H^*$, define
\[
K(V)=\{h\in\mathcal H:\ell(h)=0\text{ for every }\ell\in V\},
\qquad
Z(V)=\mathcal H/K(V).
\]
For a readout matrix $W$, taking $V$ to be the span of its rows removes directions that are invisible to that readout.
\end{definition}

Thus, output agreement supplies a bounded observable comparison object. It does not by itself establish clinical correctness or global cross-model representation alignment.

\subsection{Affine readout and the pre-threshold boundary}\label{sec:affine-readout}

In modern deep learning architectures, the terminal layer functions as a map that projects latent representations onto a categorical space. Let $h(x) \in \mathcal{H} \subset \mathbb{R}^D$ define the hidden representation vector of an input stimulus $x \in \mathcal{X}$ within a $D$-dimensional Hilbert space $\mathcal{H}$, commonly termed the residual stream or embedding manifold. The transformation mapping this hidden state to the pre-threshold logit space $\mathcal{L} \subset \mathbb{R}^M$, where $M = |\mathcal{V}|$ represents the cardinality of the model's vocabulary or target category set, is defined by the affine readout map:
\[
L(h(x)) = W_U \cdot h(x) + b
\]
where $W_U \in \mathbb{R}^{M \times D}$ denotes the linear unembedding weight matrix and $b \in \mathbb{R}^M$ represents the bias vector. Stated again, for simplicity, we're letting
\[
  h(x)\in\mathcal H\cong \mathbb R^D
\]
be a hidden representation of a stimulus, such as a note fragment, medication string, or context window. A model's readout head maps this hidden state to logits
\[
  L(h(x)) = W h(x)+b \in \mathbb R^M,
\]
where \(W\in\mathbb R^{M\times D}\), \(b\in\mathbb R^M\), and \(M\) is the number of local categories. The associated probability vector then becomes
\[
  p_i = \operatorname{softmax}(L)_i
      = \frac{\exp(L_i/\tau)}{\sum_{j=1}^M \exp(L_j/\tau)},
\]
with temperature \(\tau>0\). When full logits are available, retaining both logits and the associated probability vector preserves information that hard thresholding would discard. When they are unavailable, the receipt records the lower-information evidence mode explicitly rather than fabricating logits.

\begin{definition}[Logit receipt]
A \emph{logit receipt} for a local decision consists of
\[
  R=(C,L,p,\tau,k,m,\rho),
\]
where \(C=\{c_1,\ldots,c_M\}\) is the local category set, \(L\in\mathbb R^M\) are logits, \(p=\operatorname{softmax}(L/\tau)\), \(k\) is the top-\(k\) truncation level if any, \(m\) is model metadata, and \(\rho\) is cryptographic provenance binding the input, terminology bundle, prompt, model version, and runtime policy.
\end{definition}

A receipt is evidence, not authority. For providers that do not expose full logits, the envelope records the available output-only or top-$k$ evidence and marks unavailable fields as unavailable. The judgment engine may ignore a receipt, quarantine it, ask a human to review it, or convert supported evidence into a Dirichlet update (see Section~\ref{sec:Dirichlet}).

\subsection{Cross-model interpretation}

Exact logit agreement is a property of an evaluated output boundary, not a proof that two models implement the same internal representation or that either model is clinically correct. When only probabilities are observed, the common-logit-shift symmetry of softmax must also be accounted for. When a readout is rank deficient, the natural comparison is the readout image, or equivalently the quotient by readout-invisible directions. Transporting a behavioral or safety probe between architectures requires an explicitly identified inter-model correspondence and validation on the target model; the quotient construction alone does not imply zero-label probe portability.

\subsection{A numerical example: affine equivalence}

Take \(D=3\), \(M=6\). Let \(W_B\) be full column rank, choose an affine transform \(T,v\), and set
\[
  W_A=W_BT,
  \qquad
  b_A=W_Bv+b_B.
\]
Then
\[
  W_Ah_A+b_A=W_B(Th_A+v)+b_B.
\]
A numerical verification of this construction gives
\[
\|L_A-L_B\|_2\approx 1.49\times 10^{-15},
\]
which is consistent with floating-point roundoff in the worked example. This is ordinary linear algebra, but it is conceptually important: exact logit agreement supplies a shared observable boundary for local categorical behavior on the evaluated stimulus set; it does not establish hidden-state isomorphism beyond the assumptions above.

\subsection{Softmax dynamics and top-\texorpdfstring{\(k\)}{k} loss}

Logits are smooth; hard decisions are not. As \(\tau\to0\),
\[
  \operatorname{softmax}(L/\tau)\longrightarrow e_{\arg\max_i L_i},
\]
which collapses a continuous score vector to a vertex of the simplex. For clinical review this matters. Top-\(k\) display can hide rare but safety-critical alternatives.

For the example
\[
  L=(4.0,2.7,1.4,0.7,-0.2,-1.0),
\]
Using natural logarithms, the entropy values below are in nats:
\[
\begin{array}{c|c|c}
\tau & H(\operatorname{softmax}(L/\tau)) & \max_i p_i\\\hline
1.00 & 0.885219 & 0.711530\\
0.50 & 0.293843 & 0.924709\\
0.20 & 0.011293 & 0.998496\\
0.05 & 0.000000 & 1.000000.
\end{array}
\]
The top-3 tail mass at \(\tau=1\) is about \(0.0417\). A hard top-3 truncation sets nonzero probabilities to zero, making \(D_{\rm KL}(P\|P^{(3)})\) infinite. Therefore a clinical logit receipt must include full-spectrum logits when feasible, or at least tail mass, calibration, and a clear statement of discarded alternatives.

\subsection{Component logits: prisms of model interpretation}

If a model exposes component activations, the logit boundary can be decomposed. Suppose an approximate residual expansion has the form
\[
  h^{(L)} = h^{(0)} + \sum_{\ell=1}^{L} a^{(\ell)} + \sum_{\ell=1}^{L} m^{(\ell)},
\]
where \(a^{(\ell)}\) is an attention-block contribution and \(m^{(\ell)}\) is an MLP contribution. Because the readout is affine,
\[
  L = Wh^{(0)} + \sum_{\ell=1}^{L} W a^{(\ell)} + \sum_{\ell=1}^{L} W m^{(\ell)} + b.
\]
This gives a practical audit primitive \cite{logitlens}. A clinical model can report not only that \lstinline[language=Julia] |"the proposed category is in conflict"|, but also which model components pushed the logit toward that category. The GBI does not need to believe these explanations. It can store them as extra receipt fields and subject them to the same deterministic review.

\subsection{Logit space as a non-autonomous dynamical system}

Autoregressive and interactive models are input-driven dynamical systems. In a simplified discrete form,
\[
  h_{t+1}=\Phi_t(h_t,x_t),
  \qquad
  L_t = Wh_t+b,
  \qquad
  p_t=\operatorname{softmax}(L_t/\tau).
\]
For intuition, one may define category potentials
\[
  U_c(h)=-L_c(h).
\]
A high logit is a low potential well. Sequential inputs move the wells, so the trajectory may drift between basins. A two-dimensional illustrative system can exhibit the same qualitative effect---for example, a category switch while entropy remains high---but the present appendix does not implement that dynamical example. This is exactly the setting where deterministic clinical judgment is needed; a smooth logit trajectory can still cross a discrete decision boundary.

\subsection{What the universal logit topology contributes}

The logit-topology framing contributes four concrete design rules. First, compare AI systems at their output boundary, not by assuming their hidden states have the same coordinates. Second, treat hidden-state alignment as quotient alignment through probe-visible subspaces. Third, preserve full-spectrum or tail-mass evidence because top-\(k\) displays may hide safety-relevant alternatives. Fourth, treat neural outputs as dynamical proposals that must be judged by finite semantics, sheaf diagnostics, identity certificates, and FHIR policy.

\section{Finite boundary semantics}

From Section~\ref{sec:affine-readout}, we defined an individual logit affine readout map $L(h(x)) = W_U \cdot h(x) + b$, where $W_U \in \mathbb{R}^{M \times D}$ denoted the linear unembedding weight matrix and $b \in \mathbb{R}^M$ represented the bias vector. 

In the regime $D\ll M$, $\operatorname{rank}(W_U)\leq D$. The affine image is $L(\mathcal H)=b+\operatorname{Im}(W_U)$, whose dimension is $\operatorname{rank}(W_U)$ rather than necessarily $D$.

Mathematically, this projection partitions the latent embedding space $\mathcal{H}$ into two orthogonal vector subspaces: the active row space of $W_U$ and the null space or kernel, defined as:
\[
\ker(W_U) = \{ v \in \mathbb{R}^D \mid W_U v = 0 \}
\]
Any variation in the latent hidden representation $h(x)$ that lies within $\ker(W_U)$ is annihilated during the transformation, exerting zero influence on the output logit vector. This kernel constitutes a functionally latent, ``probe-invisible'' subspace. 

The probe-visible quotient space relates to the logits spectrum of all available neural networks through the foundational geometric principle of \textbf{readout equivalence} and coordinate-free representation mapping.

For any neural network, the terminal hidden state $h(x) \in H$ is projected onto the $M$-dimensional logit space $\mathcal{L}$ via an affine readout head, defined as:
\[
L(h(x)) = W_U \cdot h(x) + b
\]
Because neural networks differ arbitrarily in their initialization, parameterization, and layer-wise architectures, their internal hidden coordinates are not directly comparable. However, the logit spectrum serves as a shared, behaviorally observable interface through which different networks can be compared.

If we define a probe family $V$ as the linear span of the readout directions (the rows of the unembedding matrix $W_U$), the hidden dimensions that do not affect the output logits constitute the probe-invisible subspace, $K(V) = \ker(W_U)$. Factoring out this null space leaves the \textit{probe-visible quotient space}:
\[
Z(V) = H / K(V) = H / \ker(W_U)
\]
This quotient space relates to the logit spectrum of all available neural networks through three key mathematical and operational principles:

\begin{enumerate}
    \item The Canonical Linear Isomorphism
    
    By the first isomorphism theorem, the readout linear map $W_i:H_i\to\mathbb R^M$ induces a canonical linear isomorphism
    \[
    \bar W_i:H_i/\ker(W_i)\longrightarrow\operatorname{im}(W_i),\qquad [h]\longmapsto W_i h.
    \]
    Thus the probe-visible quotient is canonically identified with the image of the readout linear map; no undefined dual space is required.
    
    \item Affine Equivalence Under Output Agreement
    
    Under exact output agreement, the earlier affine reconstruction is justified when the relevant readout has full column rank. Without that assumption, quotienting identifies readout-invisible directions, but output agreement alone does not establish an invertible hidden-state transformation between models. Softmax also has a common-logit-shift symmetry that should be treated explicitly when probabilities rather than logits are compared.
    
    \item Cross-Model Probe Portability and Transfer
    
    Because the full hidden state $H$ contains model-specific probe-invisible directions, transferring a probe between hidden spaces may fail unless an inter-model correspondence is identified and validated.
\end{enumerate}

A shared probe-visible quotient can serve as a candidate comparison object, but transporting a safety or behavioral monitor between architectures requires an explicitly identified inter-model map and target-model validation. Zero-label portability is not implied by the quotient construction alone.

\subsection{Data Semantic Representation}

This means that a clinical interoperability substrate cannot begin with vague natural language. It needs a small typed boundary.

\begin{definition}[Boundary algebra]
A boundary algebra is a finite Boolean algebra \(B\). Its atoms are
\[
  \operatorname{At}(B)=\{a_1,\ldots,a_N\},
\]
and every element of \(B\) is a union of atoms.
\end{definition}

\begin{definition}[Model-relative semantics]
Let \(W\) be a set of possible clinical worlds, \(T\) a time domain, and \(L\) a location or facility domain. A semantics for \(B\) is a Boolean homomorphism
\[
  \sem{\cdot}_{\mathcal M}:B\to \mathcal P(W\times T\times L),
\]
so that
\[
\begin{aligned}
\sem{b\wedge c}&=\sem{b}\cap\sem{c},\\
\sem{b\vee c}&=\sem{b}\cup\sem{c},\\
\sem{\neg b}&=(W\times T\times L)\setminus\sem{b}.
\end{aligned}
\]
\end{definition}

This semantics is model-relative. It does not prove reality. It says exactly what the system means when it states, for example, that an allergy is active during a half-open time interval \([t_0,t_1)\) at a facility.

\section{Condensed-mathematics motivation and operational probes}

Classical point-set topology treats a space as a set of points with open subsets. Condensed mathematics changes the perspective: a space is tested by maps from compact totally disconnected spaces \cite{scholze_condensed,scholze_lectures}. For a compactly generated Hausdorff space \(X\), one writes
\[
  \underline X(S)=C^0(S,X)
\]
for a profinite probe \(S\).

The intuition is simple. A single point may not reveal the topology of a system. A convergent sequence can. Let \(\mathbb R_\delta\) be the real numbers with the discrete topology and let
\[
  f:\mathbb R_\delta\to\mathbb R
\]
be the identity map of sets. Algebraically, it looks bijective. Topologically, it is not an isomorphism. Use the profinite convergent sequence
\[
  \mathbb N_\infty=\mathbb N\cup\{\infty\}.
\]
For the cokernel statement below, regard $\mathbb R_\delta$ and $\mathbb R$ as topological abelian groups and form the associated condensed abelian groups; the cokernel is taken in that additive category, not in plain set-valued condensed spaces. Then
\[
\operatorname{Cok}(\underline f)(\mathbb N_\infty)
\cong
\frac{\{\text{convergent real sequences}\}}
     {\{\text{eventually constant real sequences}\}},
\]
which is nonzero. The probe detects the mismatch.

In software, we cannot execute infinite profinite objects. We implement \emph{operational profinite probes}: finite, clock-bounded audit loops that approximate the same testing philosophy.

\begin{definition}[Operational profinite probe]
An operational profinite probe is a finite sequence
\[
  P=(s_0\to s_1\to\cdots\to s_n)
\]
of version-pinned tests, such as terminology snapshots, identity claims, policy versions, or FHIR capability states. A probe passes if every transition preserves the declared boundary invariants.
\end{definition}

This is an engineering analogy motivated by condensed-mathematics probing: the system is tested by structured families of contexts rather than isolated examples. The finite operational probe defined here is not claimed to approximate a condensed object in a formal convergence sense, and no condensed-mathematics theorem is used as a clinical safety guarantee.

\section{Neuro-symbolic partitioning: discovery and judgment}

The architecture separates two roles.

\paragraph{Ars inveniendi: discovery.}
An LLM, parser, embedding model, or classifier may read free text and emit logit receipts. It is an untrusted compiler from messy input to typed proposals. Its output must include model version, prompt digest, terminology version, full or bounded-tail logits, and provenance.

\paragraph{Ars iudicandi: judgment.}
A deterministic engine checks whether the receipt is admissible. It performs type checking, terminology validation, Dirichlet evidence update, sheaf consistency diagnostics, identity-log verification, and FHIR transaction construction. The judgment engine is the only component allowed to create commit-ready clinical payloads.

The boundary between the two layers is a signed JSON-LD envelope:
\[
\mathcal E=(\text{subject},\text{interval},\text{facility},\text{category set},L,p,\rho,\text{policy}).
\]
This envelope can be reviewed by humans, tested by finite probes, and attached to FHIR \texttt{Provenance} or \texttt{AuditEvent} records \cite{fhir_r4,fhir_provenance}.

\section{Local hierarchical Dirichlet evidence}\label{sec:Dirichlet}

A Dirichlet-multinomial (also called a Dirichlet compound multinomial) is a probability distribution used to model categorical count data that exhibits overdispersion (more variance than a standard model would expect).

To understand the Dirichlet-multinomial, think of it as a two-step hierarchical process:
\begin{enumerate}
	\item The Dirichlet Step: Instead of assuming all observations come from one fixed set of underlying probabilities, we assume that the underlying probability vector (the chances of a specific event occurring) is continuously changing or uncertain. We draw this probability vector from a Dirichlet distribution.
	\item The Multinomial Step: Once we have our specific probability vector, we use it to generate our discrete counts using a Multinomial distribution.
\end{enumerate}

A standard multinomial distribution assumes all data is generated from one fixed, known set of probabilities. Because of this, it can often underestimate the true variability of real-world data (a problem called \textit{overdispersion}).

By injecting uncertainty about the probabilities themselves (using the Dirichlet step), the Dirichlet-multinomial allows for much wider variance. It accounts for instances where groups or subjects vary much more widely in their categorical counts than a simple multinomial would predict. At the logit boundary of statistical artificial intelligence methods, this wider variance persists.

\subsection{Local categorical decisions}

A Dirichlet distribution belongs on a local categorical simplex. Let
\[
  C=\{c_1,\ldots,c_K\}
\]
be mutually exclusive and exhaustive alternatives for one decision. Then
\[
  p\in\Delta^{K-1}=\left\{p\in\mathbb R_{\ge0}^K:\sum_{i=1}^K p_i=1\right\},
\]
and
\[
  p\sim\operatorname{Dir}(\alpha),
  \qquad
  \alpha\in\mathbb R_{>0}^K.
\]
The Fisher information matrix in \(\alpha\)-coordinates is
\[
  I(\alpha)_{ij}=\psi_1(\alpha_i)\delta_{ij}-\psi_1(\alpha_0),
  \qquad
  \alpha_0=\sum_i\alpha_i,
\]
where \(\psi_1\) is the trigamma function.

\subsection{Evidence boxes and dynamic atom registries}

A fixed evidence box
\[
  \alpha_i\in[\varepsilon,A]
\]
prevents boundary singularities. However, clinical terminologies split and merge. A rigid category set can create schema lock-in. We therefore use a hierarchical Dirichlet-multinomial registry.

At time \(t\), let \(C_t\) be the active category set. When a new category \(c_{K+1}\) appears, the registry assigns
\[
  \alpha_{K+1}=\alpha_{\rm new}>0
\]
and records the terminology version and parent category. This avoids division by zero and makes category growth auditable.

\subsection{Numerical conditioning}

The verification script compares
\[
  \alpha=(2,3,4,5)
\]
with
\[
  \alpha=(0.01,3,4,5).
\]
The first has condition number about \(20.46\). The second has condition number about \(4.55\times10^5\). This is the practical reason for an evidence box: near-boundary exclusion can make Fisher geometry numerically unstable.

\subsubsection{Expanded: Numerical Conditioning in the Dirichlet Fisher Metric}

For a \(K\)-category Dirichlet distribution with parameter
\[
\alpha=(\alpha_1,\ldots,\alpha_K)\in\mathbb R_{>0}^K,
\qquad
\alpha_0=\sum_{i=1}^K\alpha_i,
\]
the Fisher information matrix in \(\alpha\)-coordinates is
\[
g_{ij}(\alpha)
=
\psi_1(\alpha_i)\delta_{ij}
-
\psi_1(\alpha_0),
\]
where \(\psi_1\) is the trigamma function. Equivalently,
\[
g(\alpha)
=
\operatorname{diag}\big(\psi_1(\alpha_1),\ldots,\psi_1(\alpha_K)\big)
-
\psi_1(\alpha_0)\mathbf 1\mathbf 1^\top.
\]

The condition number is
\[
\kappa_2(g)
=
\frac{\lambda_{\max}(g)}{\lambda_{\min}(g)}.
\]

The verification script compares two parameter vectors:
\[
\alpha^{(a)}=(2,3,4,5),
\]
and
\[
\alpha^{(b)}=(0.01,3,4,5).
\]

For the moderate interior point \(\alpha^{(a)}=(2,3,4,5)\), one obtains approximately
\[
\psi_1(\alpha^{(a)})
=
(0.644934,\;0.394934,\;0.283823,\;0.221323),
\]
and
\[
\psi_1(\alpha_0)=\psi_1(14)\approx 0.074040.
\]
The eigenvalues of \(g(\alpha^{(a)})\) are approximately
\[
(0.029494,\;0.254435,\;0.361568,\;0.603356).
\]
Therefore
\[
\kappa_2(g(\alpha^{(a)}))
=
\frac{0.603356}{0.029494}
\approx 20.46.
\]

For the near-boundary point
\[
\alpha^{(b)}=(0.01,3,4,5),
\]
the first trigamma value is enormous:
\[
\psi_1(0.01)\approx 10001.621.
\]
This follows from the asymptotic behavior
\[
\psi_1(x)\sim \frac{1}{x^2}
\qquad
\text{as }x\downarrow 0.
\]
Thus
\[
\psi_1(0.01)\approx \frac{1}{(0.01)^2}=10^4,
\]
up to lower-order terms.

The resulting eigenvalues are approximately
\[
(0.021986,\;0.254705,\;0.362908,\;10001.5344).
\]
Hence
\[
\kappa_2(g(\alpha^{(b)}))
=
\frac{10001.5344}{0.021986}
\approx
4.55\times 10^5.
\]

If an implementation adopts a condition-number budget, the lower evidence-box bound $\varepsilon$ can be selected by solving for the smallest value that keeps $\kappa_2(g)$ below that budget over the declared parameter region. In the one-dimensional sweep $[\varepsilon,3,4,5]$, a budget of $10^4$ is crossed near $\varepsilon\approx0.066$. This number is illustrative rather than a universal certification threshold. The Appendix~\ref{app:code} deliberately evaluates $\alpha_1=0.01$ as a near-boundary stress case; the reference script does not itself enforce an operational evidence box.

Additionally, this is the practical reason for an evidence box. As any \(\alpha_i\) approaches zero, the Dirichlet Fisher geometry becomes extremely anisotropic. A small movement in the nearly excluded category direction has enormous Fisher cost, while other directions remain at ordinary scale. Numerical optimization then becomes stiff: gradient steps, Newton solves, natural gradient updates, and uncertainty propagation can become dominated by the singular boundary coordinate.

It is more accurate to call this a \emph{near-boundary sparsity} or \emph{near-boundary exclusion} effect, rather than ordinary certainty. In a Dirichlet model, \(\alpha_i\ll 1\) places strong mass near the simplex face \(p_i=0\). The category is nearly excluded, and the Fisher metric becomes singular at that face.

The evidence box
\[
\alpha_i\in[\varepsilon,A]
\]
keeps every parameter away from zero and infinity. Since \(\psi_1\) is continuous and positive on compact subsets of \((0,\infty)\), the Fisher matrix then has finite eigenvalue bounds:
\[
0<\lambda_*(K,\varepsilon,A)
\le
\lambda_{\min}(g(\alpha))
\le
\lambda_{\max}(g(\alpha))
\le
\lambda^*(K,\varepsilon,A)
<\infty.
\]
Therefore,
\[
\kappa_2(g(\alpha))
\le
\frac{\lambda^*(K,\varepsilon,A)}{\lambda_*(K,\varepsilon,A)}
<\infty.
\]

The evidence box is not merely a numerical trick. It is a certificate that every categorical evidence state remains inside a computationally stable region of the Fisher--Dirichlet manifold.

\section{Finite sheaves and mapping-cone diagnostics}

Finite sheaves provide a structured way to represent local data constraints and their compatibility. In the intended EHR setting, a contradiction should be localized before a governed write is constructed, so that independent work items can be routed separately when policy permits. These diagnostics do not override the atomic semantics of a FHIR transaction Bundle; any stalk-level quarantine must occur before transaction construction or across separately scoped transactions or batch work.

\subsection{Cellular sheaves in one page}

A finite cellular sheaf assigns a vector space to each cell and a linear restriction map to each incidence \cite{hansen_ghrist_spectral}. In this paper, every stalk is a finite-dimensional real inner-product vector space. This is essential because Laplacians require adjoints.

Let \(X\) be a finite cell complex and \(\mathcal F\) a sheaf on \(X\). The cochain space is
\[
  C^k(X;\mathcal F)=\bigoplus_{\sigma\in X_k}\mathcal F(\sigma).
\]
The coboundary \(\delta^k:C^k\to C^{k+1}\) is assembled from restriction maps. The Hodge Laplacian is
\[
  \Delta^k=(\delta^{k-1})(\delta^{k-1})^*+(\delta^k)^*\delta^k.
\]
Its kernel represents cohomology \cite{hansen_ghrist_spectral}:
\[
  \ker\Delta^k\cong H^k(X;\mathcal F).
\]

\subsection{Mapping cones}

Let \(\varphi:\mathcal F\to\mathcal G\) be a sheaf morphism. The mapping cone measures how far \(\varphi\) is from gluing consistently \cite{weibel_homological}. Its cochains are
\[
  C^q(\operatorname{Cone}\varphi)=C^{q+1}(X;\mathcal F)\oplus C^q(X;\mathcal G),
\]
with differential
\[
  d_{\rm cone}(x,y)=(-d_{\mathcal F}x,\varphi x+d_{\mathcal G}y).
\]
The cone Laplacian has a null space of relative obstructions.

\begin{definition}[Trace cell energy]
Let \(\Pi_\lambda\) be the orthogonal projector onto the obstruction null space of a cone Laplacian, and let \(\Pi_\sigma\) be the projector onto a stalk subspace. Define
\[
  E_\sigma=\operatorname{tr}(\Pi_\lambda\Pi_\sigma).
\]
\end{definition}

\begin{proposition}[Basis invariance]
The quantity \(E_\sigma\) is invariant under orthogonal rotation of any basis chosen for the obstruction subspace.
\end{proposition}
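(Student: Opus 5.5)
The plan is to express the projector $\Pi_\lambda$ through an orthonormal basis of the obstruction subspace, and then to show that the trace $E_\sigma$ does not change when that basis is rotated.

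Let $N=\ker\Delta_{\rm cone}$ denote the obstruction null space. It sits inside the finite-dimensional inner-product space $C^q(\operatorname{Cone}\varphi)$. This space is an orthogonal direct sum of the stalks, and each stalk carries the inner product fixed in the cellular-sheaf setup.

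Choose an orthonormal basis $u_1,\ldots,u_r$ of $N$ and collect it as the columns of a matrix $U$, so that $U^*U=I_r$. Then the orthogonal projector is
\[
\Pi_\lambda=UU^*=\sum_{j=1}^r u_ju_j^*.
\]
By cyclicity of the trace,
\[
E_\sigma=\operatorname{tr}(UU^*\Pi_\sigma)=\operatorname{tr}(U^*\Pi_\sigma U)=\sum_{j=1}^r\langle u_j,\Pi_\sigma u_j\rangle=\sum_{j=1}^r\|\Pi_\sigma u_j\|^2 .
\]
This identity is the computational form one would actually use to evaluate $E_\sigma$ from a numerically returned basis.

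Now take a second orthonormal basis $U'=UQ$, where $Q$ is an $r\times r$ orthogonal (or unitary) matrix; this is exactly what an orthogonal rotation of the basis means. Then
\[
U'U'^*=UQQ^*U^*=UU^*,
\]
so the projector $\Pi_\lambda$ is literally unchanged. The same conclusion follows from the trace form:
\[
\operatorname{tr}(Q^*U^*\Pi_\sigma UQ)=\operatorname{tr}(U^*\Pi_\sigma U),
\]
again by cyclicity. Either way, $E_\sigma$ is invariant under the rotation. More conceptually, $\Pi_\lambda$ depends only on the subspace $N$ and the ambient inner product, so no basis choice ever enters the definition. The computation above simply makes that dependence explicit.

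The only step needing care, and the one I expect to be the main obstacle, is justifying that the formula $UU^*$ really gives the \emph{orthogonal} projector. This requires the inner product on the cone cochains to be the one used to form adjoints, which is the direct-sum inner product $C^{q+1}(X;\mathcal F)\oplus C^q(X;\mathcal G)$. It also requires the basis to be orthonormal for that inner product.

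If a solver instead returns a non-orthonormal basis $V$, I would first replace it by $V(V^*V)^{-1}V^*$, which equals the same projector. I would then note that the claimed invariance concerns orthogonal rotations only. The statement is about orthonormal frames related by elements of $O(r)$, and it makes no claim for arbitrary invertible changes of frame unless the projector is renormalized in this way.

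I would also remark that $\Pi_\sigma$ is fixed throughout, since it is determined by the stalk subspace alone. Consequently $0\le E_\sigma\le\min(r,\dim\mathcal F(\sigma))$ and $\sum_\sigma E_\sigma=r$ when the $\Pi_\sigma$ sum to the identity. These are a useful consistency check, not needed for the proposition itself.
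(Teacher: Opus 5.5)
Your proposal is correct and takes essentially the same approach as the paper, which likewise notes that only the projector $\Pi_\lambda=UU^\top$ enters $E_\sigma$ and that $UQQ^\top U^\top=UU^\top$ for $Q\in O(m)$. Your additional trace-cyclicity form $\sum_j\|\Pi_\sigma u_j\|^2$ and the consistency bounds are sound, but they are not needed for the proposition.
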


\begin{proof}
Only the projector \(\Pi_\lambda=UU^\top\) matters. Replacing \(U\) by \(UQ\) for \(Q\in O(m)\) gives
\[
  UQQ^\top U^\top=UU^\top.
\]
Therefore \(\operatorname{tr}(\Pi_\lambda\Pi_\sigma)\) is unchanged.
\end{proof}

\subsection{Graceful degradation}

For a FHIR Bundle submitted with \texttt{type=transaction}, atomic rollback remains the transaction semantic \cite{fhir_r4}. Stalk-level quarantine is applied before constructing that atomic transaction, or across independently scoped transactions or batch work. If
\[
  E_\sigma>\theta_{\rm quarantine},
\]
only the affected stalk is quarantined. Other independent stalks may proceed if policy allows.

The executable appendix includes a \emph{projector-based toy obstruction surrogate} with three coordinate stalks,
\[
  \text{Allergy},\quad \text{MedicationRequest},\quad \text{RenalLab}.
\]
After orthonormalizing the printed obstruction basis, the trace energies are
\[
\begin{array}{c|c|c}
\text{stalk} & E_\sigma & \text{toy gate}\\\hline
\text{Allergy} & 0.019778 & \text{eligible}\\
\text{MedicationRequest} & 1.977750 & \text{quarantine}\\
\text{RenalLab} & 0.002472 & \text{eligible}.
\end{array}
\]
These values illustrate basis-invariant localization, but the script does not construct a sheaf morphism, a mapping-cone differential, or a mapping-cone Hodge Laplacian. Therefore this numerical table is not evidence that a deployed mapping-cone diagnostic has been validated. ``Eligible'' means only that an independent work item could proceed to later policy gates; it does not mean partial success inside a single atomic FHIR transaction.

\section{Geometry for audit and state charts}

The EHR label adjudicator's interface shows a tabular contradiction report: stalk energies, offending restrictions, terminology versions, and provenance. The clinical decision is made from discrete facts and policy. These reports are generated from version-pinned mappings, restriction checks, and provenance available to the adjudication workflow.

Using Higher-dimensional hyperellipsoids, identity certificates and audit certificates are generated digitally and used to support the authenticity, accountability, and security of EHR systems. They act as the digital passport and tamper-evident logbook for clinical data. Identity certificates are cryptographic credentials assigned to a specific patient.

Audit certificates are tamper-evident records or cryptographically signed metadata logs used to record and examine system activity involving patient information. An implementation may record user identity, timestamp, affected resource or action, source/device context, and provenance according to institutional policy and applicable requirements. HIPAA Security Rule audit controls require mechanisms to record and examine activity in systems that use or contain electronic protected health information \cite{hipaa_audit_controls}; this manuscript does not claim that HIPAA universally mandates the exact field set or timestamp granularity stated in the superseded text. Such logs support security, compliance, and forensic review.

\subsection{Higher-dimensional hyperellipsoid certificates}

For an \(n\)-dimensional state chart, the correct quasiconformal object is an orientation-preserving homeomorphism \cite{reshetnyak_distortion,vaisala}
\[
  f:\Omega\subset\mathbb R^n\to\Omega'\subset\mathbb R^n
\]
with \(f\in W^{1,n}_{\rm loc}\) and
\[
  \|Df(x)\|^n\le KJ_f(x)\quad\text{a.e.}
\]
At differentiability points, \(Df(x)\) maps infinitesimal spheres to ellipsoids. If the singular values are
\[
  \sigma_1\ge\cdots\ge\sigma_n>0,
\]
then a practical certificate is
\[
  H_f(x)=\frac{\sigma_1}{\sigma_n}.
\]
A safety-critical implementation must also check positive Jacobian, boundary behavior, inverse residuals, and stratum preservation.

For the matrix
\[
A=\begin{bmatrix}
1.20&0.10&0\\
0.20&0.80&0.05\\
0&0.10&1.10
\end{bmatrix},
\]
the script computes singular values approximately
\[
  (1.254966,1.110695,0.737507),
\]
so
\[
  H\approx1.701632,
  \qquad
  J\approx1.028000,
  \qquad
  K_O=\frac{\sigma_1^3}{J}\approx1.922661.
\]

\subsection{Geometry for Audit and State Charts}

The geometry of these certificates can be useful or validation and interpretability.  The EHR label adjudicator can display a tabular contradiction report containing:
\[
\left(
\begin{array}{l}
\text{stalk energies }E_\sigma,\\
\text{offending restriction maps},\\
\text{terminology versions},\\
\text{source and target code systems},\\
\text{FHIR resource identifiers},\\
\text{Provenance references},\\
\text{policy rule identifiers},\\
\text{recommended abstention or review action}
\end{array}
\right).
\]

The clinical decision is made from discrete facts, policy, and human authority. The chart may orient the reviewer, but it should not authorize a write.

Reports are generated deterministically by comparing candidate local sections against the version-pinned reference EHR and terminology sheaves. Formally, let
\[
\epsilon:\mathcal F_{\mathrm{cand}}\longrightarrow \mathcal W_{\mathrm{ref}}
\]
be the grounding morphism from candidate clinical assertions to the authoritative reference sheaf. The mapping-cone Laplacian \(L_C\) exposes relative inconsistencies. For each clinical stalk \(\sigma\), the trace energy
\[
E_\sigma
=
\operatorname{tr}(\Pi_\lambda\Pi_\sigma)
\]
measures how much of the obstruction space is supported on that stalk.

If the visual chart distortion satisfies
\[
K>K_{\mathrm{tabular}},
\]
for example \(K_{\mathrm{tabular}}=1.5\), the system shows the tabular contradiction report, but is not admissible as clinical assertion.  This is a useful tool for certified adjudicators, clinical decision makers, and policy makers alone---not a definitive statement of what should or should not occur.

Thus, the decision path is
\begin{alignat*}{1}
&\text{candidate claim} \\
&\quad \color{gray}\xrightarrow{\hspace*{1cm}}\text{\color{black}identity, terminology, provenance, and policy checks} \\
&\qquad\qquad \color{gray}\xrightarrow{\hspace*{1cm}}\text{\color{black}mapping-cone contradiction report} \\
&\qquad\qquad\qquad\qquad \color{gray}\xrightarrow{\hspace*{1cm}}\text{\color{black}human adjudication or abstention} \\
&\qquad\qquad\qquad\qquad\qquad\qquad \color{gray}\xrightarrow{\hspace*{1cm}}\text{\color{black}FHIR transaction only if all gates pass.}
\end{alignat*}

The chart is advisory. The tabular contradiction report is authoritative for review.

\section{Decentralized Cryptographic Sheaf-Enclave protocol}

A centralized permit-before-action safety-receipt layer is one relevant comparison point for fail-closed clinical AI integration. U.S. Patent 12,633,414 B1 describes a safety-receipt layer interposed between an EHR and AI-assisted clinical decision support, with a canonical context envelope, policy evaluation, permit outcomes, and receipt generation \cite{lee_safety_receipt_patent}. The discussion here is a high-level technical comparison only; it is not a claim chart, a legal opinion, or a conclusion about patent scope, validity, infringement, or comparative novelty.

The DCSE design explored here studies a different systems decomposition: distributed non-equivocation/consensus, trusted-execution checks, and sheaf-based consistency certificates are composed at the admission boundary before governed write-back. The contribution claimed in this manuscript is the architecture and its explicit separation of model proposal from deterministic admission checks, not a legal conclusion that the design is patentably novel over any cited reference.

The DCSE protocol sketch contains the following interconnected components:

\begin{center} 
\begin{tikzpicture}[node distance=1.0cm and 0.6cm, scale=0.85, transform shape] 
  
  \node[enclave, align=center] (EHR) { 
    \textbf{\uppercase{Heterogeneous EHR Enclaves}}\\[0.15cm] 
    {\footnotesize (Modular EHR Enclave, Central EHR Enclave, COTS Systems)} 
  }; 

  \node[tee, align=center, below=2.2cm of EHR] (TEE) { 
    \textbf{\uppercase{Hardware-Enforced TEE Boundary Convergence Gate}}\\[0.15cm] 
    \begin{minipage}{7.5cm} 
      \textbullet\ Runs a specified TEE-assisted non-equivocation/BFT profile\\ 
      \textbullet\ Runs small deterministic trust-boundary checks inside TEEs; dense linear algebra remains outside 
    \end{minipage}
  }; 

  \node[cohomology, align=center, below=2.2cm of TEE] (COH) { 
    \textbf{\uppercase{Cohomological Consistency Evaluation Engine}}\\[0.15cm] 
    \begin{minipage}{9.0cm}
      \textbullet\ Maps incoming transaction records to cellular sheaf spaces on posets\\ 
      \textbullet\ Computes a degree-appropriate mapping-cone Hodge diagnostic\\ 
      \textbullet\ Isolates data stalks where local trace energy $E_\sigma > \theta$ 
    \end{minipage}
  }; 

  \node[commit, align=center, text width=8.5cm, below left=2.2cm and 5.0cm of COH.south, anchor=north] (COMMIT) { 
    \textbf{\uppercase{Commit Transaction}}\\[0.15cm] 
    \begin{minipage}{6.5cm}
      \textbullet\ Outputs a signed consistency attestation; ZK is an optional future extension\\ 
      \textbullet\ Commits a validated atomic transaction or separately scoped transaction(s) 
    \end{minipage}
  }; 

  \node[quarantine, align=center, text width=8.5cm, below right=2.2cm and 5.0cm of COH.south, anchor=north] (QUARANTINE) { 
    \textbf{\uppercase{Surgical Quarantine}}\\[0.15cm] 
    \begin{minipage}{7.5cm}
      \textbullet\ Degrades anomalous stalks into an isolated sandbox container\\ 
      \textbullet\ Quarantines before atomic transaction construction, or routes independent work separately 
    \end{minipage}
  }; 

  \draw[arrow] (EHR) -- node[labelnode, right] {FHIR Transaction Bundle\\(Untrusted Ingestion)} (TEE); 
  \draw[arrow] (TEE) -- node[labelnode, right] {Consensus-Verified\\Secure Assembly} (COH); 

  \coordinate (split) at ($(COH.south) + (0,-1.1)$); 
  \draw[line] (COH.south) -- (split); 
  \draw[arrow] (split) -| (COMMIT.north) node[pos=0.28, above, labelnode] {Consistent State\\($E_\sigma \le \theta$)}; 
  \draw[arrow] (split) -| (QUARANTINE.north) node[pos=0.32, above, labelnode] {Inconsistent Stalk Only\\($E_\sigma > \theta$)}; 
\end{tikzpicture} 
\end{center}

\begin{enumerate} 
\item \textbf{Hardware-Enforced, TEE-Assisted Consensus}: \\ 
DCSE places small, deterministic trust-boundary checks inside hardware-secured TEEs at participating nodes \cite{intel_sgx,intel_sgx_attestation}. An optimistic TEE-assisted non-equivocation profile may reduce protocol overhead \cite{minbft}, while the conservative fallback uses classical $3f+1$-style BFT assumptions \cite{pbft,bedrock_bft}. The manuscript does not claim a generic $2f+1$ resilience threshold or sub-millisecond latency without a specified protocol, threat model, and benchmark. Dense SVD, QR, and eigensolvers remain outside the enclave as described below. 

\item \textbf{Homological Inconsistency Gating}: \\ 
Rather than executing only static policy-graph rules, DCSE can compile incoming candidate clinical updates into a topological poset representing a cellular sheaf. The mathematical design defines a mapping-cone complex and degree-appropriate Hodge Laplacian; dense construction or eigensolution may occur outside the enclave, with the enclave verifying sparse residuals or signed certificates. The toy numerical appendix uses a projector-based obstruction surrogate, which must not be identified with a mapping-cone Laplacian unless it is constructed from the cone differential. Localized obstruction support is summarized by the basis-invariant trace cell energy: 
\[ 
E_{\sigma} = \text{tr}(\Pi_{\lambda}\Pi_{\sigma}) 
\] 
where $\Pi_{\lambda}$ is the projector onto the null space (representing cohomological obstructions) and $\Pi_{\sigma}$ is the stalk projector. 

\item \textbf{Stalk-Level Surgical Degradation (Algebraic Grafting)}: \\ 
DCSE proposes stalk-level quarantine before constructing an atomic FHIR transaction, or across separately scoped transactions. When $E_{\sigma}$ exceeds the quarantine threshold, the affected stalk is isolated from the candidate write set. If a FHIR Bundle is submitted with \texttt{type=transaction}, its atomic semantics are preserved: the bundle commits as a whole or rolls back as a whole \cite{fhir_r4}. Quarantined data are excluded or routed separately rather than partially committed from the same transaction. This manuscript does not characterize the patent's claim scope as requiring whole-transaction rollback without claim-level analysis. 

\item \textbf{Zero-Knowledge Consistency Attestation}: \\ 
A future deployment may attach a zero-knowledge proof to a hardware-attested consistency result. Such a proof would certify only the specified finite computation and policy predicate without revealing the private witness; it would not assert medical truth. 
\end{enumerate}

\paragraph{Technical distinctions in the design space.}
The following points describe architectural differences rather than legal novelty:
\begin{itemize}
\item \textit{Centralized receipt gating vs. distributed verification}: DCSE combines a TEE-assisted non-equivocation/BFT profile with a conservative BFT fallback; exact resilience depends on the selected protocol and threat model \cite{bedrock_bft}.
\item \textit{Policy evaluation vs. consistency diagnostics}: in addition to deterministic policy checks, DCSE proposes cellular-sheaf consistency diagnostics. A deployed implementation must construct the actual mapping-cone differential before calling a derived Laplacian or obstruction space a mapping-cone certificate.
\item \textit{Candidate-work quarantine vs. transaction atomicity}: candidate items may be quarantined before FHIR transaction construction or routed into separately scoped work. A submitted FHIR transaction retains all-or-nothing semantics \cite{fhir_r4}.
\item \textit{Optional privacy-preserving attestation}: zero-knowledge proof machinery is future work and would attest only to a specified finite computation and policy predicate, not to medical truth.
\end{itemize}

\subsection{Protocol objects}

A DCSE node maintains:
\[
  (B,V,\Theta,\mathcal F,\mathcal W,\mathcal L,\mathcal P).
\]
These objects separate model evidence from enterprise authority:
\begin{description}[leftmargin=2.8em,style=nextline]
\item[\(B\): boundary algebra.] The finite typed universe of assertions and actions that a model or agent is permitted to propose. It prevents arbitrary free-form output from silently becoming an authoritative enterprise fact.
\item[\(V\): versioned semantic bundle.] The terminology, ontology, schema, code-system, mapping, message-format, and other controlled-vocabulary versions that define the meaning of a proposal at evaluation time.
\item[\(\Theta\): evidence registry.] The calibrated local categorical evidence state. In the construction used here, this includes hierarchical Dirichlet evidence. A model probability or logit can update evidence only through declared rules; it is not authority by itself.
\item[\(\mathcal F\): candidate/local state.] The proposed local assertions, mappings, records, actions, or work items whose compatibility is being evaluated.
\item[\(\mathcal W\): authoritative grounding state.] Version-pinned systems of record, signed source data, approved registries, trusted references, human adjudications, and other external evidence against which the candidate state is judged.
\item[\(\mathcal L\): identity/provenance ledger.] The immutable history used to bind a proposal to the correct subject, entity, case, or work item and to detect protocol equivocation. Ledger consistency proves provenance and non-equivocation properties, not the truth of an identity match by itself.
\item[\(\mathcal P\): runtime admissibility policy.] The versioned enterprise contract that maps verified facts, failures, dependencies, authority, and operational context to allowed actions such as admit, quarantine, abstain, expert review, or reject.
\end{description}

\subsection{Operational policy contract and domain portability}\label{sec:policy-portability}

The policy object \(\mathcal P\) is the most deployment-specific component. A useful operational decomposition is
\[
\mathcal P=
(\mathcal A,\mathcal G,\mathcal T,\mathcal D,\mathcal H,\mathcal Q,\mathcal R,\Lambda,\mathcal E,\mathcal F_b),
\]
where \(\mathcal A\) is the allowed action set; \(\mathcal G\) contains hard gating predicates; \(\mathcal T\) contains thresholds, freshness limits, and temporal-validity windows; \(\mathcal D\) specifies dependency rules and dependency closure; \(\mathcal H\) specifies human authority, dual-control, or named-review requirements; \(\mathcal Q\) defines quarantine scope; \(\mathcal R\) defines recovery and escalation; \(\Lambda\) specifies liveness and degraded-operation constraints; \(\mathcal E\) defines signed exception and override rules; and \(\mathcal F_b\) defines failover behavior when required services or evidence are unavailable. A policy instance should also be attributable through fields such as
\[
(\texttt{policy\_id},\texttt{version},\texttt{effective\_time},\texttt{authority},\texttt{hash}).
\]
This decomposition is an engineering contract, not an additional mathematical theorem. The invariant is that every admissibility decision is reproducible under an explicit policy version rather than inferred ad hoc from model confidence.

The verification architecture is therefore portable while the domain semantics are not. Table~\ref{tab:domain-portability} illustrates how the same tuple can be instantiated in three enterprise settings without claiming that a healthcare benchmark can be copied unchanged into another sector.

\begin{table}[htbp]
\centering
\scriptsize
\caption{Illustrative domain translations of the GBI/DCSE protocol objects. The architecture is shared; semantics, authoritative evidence, and policy are domain-specific.}
\label{tab:domain-portability}
\begin{tabularx}{\textwidth}{@{}lXXX@{}}
\toprule
Object & Healthcare / EHR & Financial services & Government / mission systems \\
\midrule
\(B\) & Typed mapping, identity, temporal, evidence, and admissible-action states. & Approve, hold, review, sanctions-review, fraud-review, reconciliation-conflict, reject. & Verified, conflicting, incomplete, releasable, restricted, review-required, reject. \\ \addlinespace
\(V\) & FHIR versions; RxNorm, SNOMED CT, LOINC, local dictionaries; policy versions. & Instrument and product taxonomies; legal-entity schemas; message, sanctions-list, and risk-model versions. & Mission schemas; controlled vocabularies; policy directives; handling and data-standard versions. \\ \addlinespace
\(\Theta\) & Evidence over bounded mapping, identity, temporal, or policy-action categories. & Evidence over legitimate/suspicious/blocked/unknown, entity matching, or other bounded risk decisions. & Evidence over entity resolution, source reliability, mission status, or action categories. \\ \addlinespace
\(\mathcal F\) & Candidate mapping, FHIR resource, identity link, temporal status, or write-ready work item. & Candidate payment, trade, reconciliation update, underwriting result, account change, or agent action. & Candidate claim, case update, entity association, report element, workflow action, or agent tool call. \\ \addlinespace
\(\mathcal W\) & Source EHR records, authoritative terminology, provenance, policy, and human adjudication. & Core ledger, KYC systems, approved market feeds, sanctions sources, risk limits, signed authorizations. & Authoritative registries, signed source records, mission systems, directives, and human adjudications. \\ \addlinespace
\(\mathcal L\) & Patient/entity identity decisions plus immutable execution provenance. & Customer, account, and legal-entity identity plus transaction provenance and authorization lineage. & Entity, case, and credential identity plus signed provenance and non-equivocation history. \\ \addlinespace
\(\mathcal P\) & Identity, terminology, temporal, evidence, dependency, write-policy, review, and failover constraints. & Limits, counterparty eligibility, sanctions, freshness, separation of duties, jurisdiction, human approval, permitted automation, overrides, and failover. & Access, classification/handling, need-to-know, provenance, jurisdiction, autonomous-action limits, human authority, continuity-of-operations, and degraded-mode rules. \\
\bottomrule
\end{tabularx}
\end{table}

\paragraph{Liveness without violating atomicity.}
Localized quarantine is a pre-commit work-item mechanism, not permission to partially commit an operation whose underlying system requires atomicity. In a financial workflow, for example, one disputed payment or legal-entity mapping can be excluded from a candidate settlement batch and routed for review while independent verified work continues if institutional policy permits. In a government workflow, one ambiguous entity association can be quarantined while unrelated mission work remains live. The same rule appears in the EHR setting: an inadmissible stalk is excluded or separately routed before an atomic FHIR transaction is constructed.

\paragraph{Human review and automation bias.}
The review surface should expose the deterministic reasons for inadmissibility---for example candidate value, authoritative evidence, source freshness, schema or terminology version, authority, dependencies, provenance, policy rule, and required action---rather than reduce review to a model confidence number. Geometric charts can remain advisory, while a scannable tabular contradiction record is the authoritative review artifact.

\paragraph{Customer-deployed diagnostic evaluation.}
A versioned evaluation package can be executed locally against an enterprise's actual model, agent, retrieval, data-interface, and policy stack. The output is then a customer-specific empirical failure surface rather than only a generic leaderboard score or proxy estimate. Failure slices can inform a data-development roadmap: expert labeling, synthetic or counterfactual examples, evaluator and grader development, retrieval changes, benchmark expansion, or post-training environments, followed by re-evaluation. This diagnostic loop is vendor-neutral; it identifies where intervention may add value without predetermining the intervention or provider.

\subsection{TEE execution and sparse verification}

Trusted execution environments are useful for confidentiality and attestation \cite{intel_sgx,intel_sgx_attestation}, but enclaves are not a place to run arbitrary dense numerical workloads. The DCSE design places small, deterministic checks inside the enclave:
\begin{itemize}
\item signature and nonce verification,
\item sparse residual checks such as \(\|Lx\|\),
\item hash-chain validation,
\item policy evaluation,
\item certificate signing.
\end{itemize}
Dense SVD, QR, and large eigensolvers are performed outside the enclave or replaced with sparse certified residual checks. A WASM/WAMR-style packaging can enforce heap, I/O, and syscall budgets for deterministic enclave modules.

\subsection{BFT identity logging and fallback}

TEE-assisted protocols using a unique sequential identifier generator can reduce communication \cite{minbft} in optimistic settings, but hardware counters can fail. Therefore the identity ledger has two modes.

\paragraph{Fast path.} If attestation and supported monotonic-counter or equivalent trusted freshness services are healthy, the node may use a TEE-assisted non-equivocation profile \cite{intel_sgx_attestation,intel_sgx_pse}.

\paragraph{Fallback path.} On counter failure, enclave restart, attestation loss, or equivocation evidence, authoritative clinical writes halt and the system falls back to conservative \(3f+1\)-style BFT assumptions \cite{pbft, bedrock_bft}. The fallback may be implemented by a mechanized BFT protocol family \cite{zhao_bythos,bythos_artifact}. This protects clinical safety from a single hardware-counter failure.

The ledger proves protocol-level non-equivocation, not patient identity truth. Clinical identity remains an external validity predicate.

\subsection{Zero-knowledge consistency attestations}

A future deployment may attach a zero-knowledge proof that a specified finite verification computation and policy predicate passed without revealing the private witness. A schematic statement is
\[
  \exists\,w:\
  \operatorname{VerifyConeCertificate}(c,w)=1
  \quad\land\quad
  \operatorname{Policy}(w)=\texttt{pass},
\]
where the public input $c$ binds the transaction digest, policy version, and verification parameters. If vanishing of a particular cohomology group is part of the certificate, that property must be encoded by a concrete finite computation or residual check; the expression $H^1(\operatorname{Cone}\varphi)=0$ is not itself a witness predicate. The proof would attest only to the encoded computation, not to medical truth.

\section{Complete EHR interoperability application at scale}

The motivating setting is a large federated clinical network with legacy RPMS/VistA-like records, local site dictionaries, terminology services, FHIR-capable endpoints, identity-reconciliation needs, and human review \cite{ihs_modernization,ihs_rpms}. The Indian Health Service (IHS) is used here as a public interoperability context, not as a deployment claim, endorsement, or statement that this architecture is part of PATH EHR. The proposed substrate would sit between systems as a fail-closed co-processor rather than replace the EHR.

The complete pipeline for this case would be:

\begin{enumerate}
\item \textbf{Capability discovery.} Query the destination FHIR server's capability statement and record supported resources, interactions, versioning, security, and transaction behavior \cite{fhir_r4}.
\item \textbf{Identity gate.} Resolve source and destination patient identifiers through a BFT-backed identity-decision log. If identity confidence or protocol safety fails, halt.
\item \textbf{Discovery parse.} An LLM or parser emits logit receipts over local categories, such as medication mapping status or allergy state.
\item \textbf{Boundary type check.} The judgment engine checks category sets, timestamps, facility scope, terminology bundle, and provenance.
\item \textbf{Evidence update.} Accepted receipts update local hierarchical Dirichlet registries.
\item \textbf{Sheaf diagnostic.} Mapping-cone stalk energies identify contradictions.
\item \textbf{Reviewer presentation.} High-risk contradictions are shown as tables first. Charts are optional secondary aids.
\item \textbf{Commit or abstain.} Safe human-approved changes are sent as FHIR transactions with Provenance and AuditEvent. Failures return OperationOutcome and quarantine data \cite{fhir_r4,fhir_provenance,fhir_outcome}.
\end{enumerate}

\subsection{Example: penicillin allergy and amoxicillin}

Suppose the source record contains an active confirmed high-criticality penicillin allergy. A clinician proposes amoxicillin \cite{fda_amoxil}. The parser emits a medication-allergy logit receipt over
\[
\{\texttt{no-conflict},\texttt{possible-conflict},\texttt{confirmed-conflict},\texttt{unknown}\}.
\]
The judgment engine checks a version-pinned clinical terminology and policy bundle; if that bundle classifies the proposed order as conflicting with the recorded allergy state and the sheaf diagnostic exceeds its review threshold, the system treats the candidate write as inadmissible. It emits \cite{fhir_outcome}:
\[
\texttt{OperationOutcome(severity=error, code=business-rule)}.
\]
No medication transaction is committed unless a named clinician performs an explicit override with Provenance \cite{fhir_provenance}.

\subsection{Example: metformin without renal context \cite{metformin_label}}

Suppose a metformin order is proposed and the version-pinned local clinical policy requires qualifying renal context, but no qualifying renal observation is available. The local category is
\[
\{\texttt{renal-context-present},\texttt{renal-context-expired},\texttt{renal-context-missing}\}.
\]
Under that version-pinned policy, a high posterior mass on \texttt{renal-context-missing} routes the candidate medication request to quarantine or review rather than authorizing a write. The system may commit a Provenance record of the abstention but not the medication update.

\subsection{FHIR resources used}

The core implementation uses \cite{fhir_r4,fhir_provenance,fhir_outcome}:
\[
\begin{array}{ll}
\texttt{Patient} & \text{identity-bound subject},\\
\texttt{MedicationRequest} & \text{candidate medication orders},\\
\texttt{AllergyIntolerance} & \text{allergy state},\\
\texttt{Observation} & \text{labs such as eGFR},\\
\texttt{Provenance} & \text{evidence lineage and signatures},\\
\texttt{AuditEvent} & \text{security audit trail},\\
\texttt{OperationOutcome} & \text{structured fail-closed response}.
\end{array}
\]

\section{Empirical admission-boundary evaluation: GBI BoundaryBench v0.1}\label{sec:boundarybench}

The architecture above motivates a separate empirical question: can a model proposal cross a frozen structured-admissibility boundary without access to the trusted answer package? GBI BoundaryBench v0.1 is a synthetic legacy-EHR benchmark built for that purpose \cite{boundarybench_repo}. It is a companion evaluation artifact, not a clinical validation study of DCSE.

The frozen held-out package contains 256 tasks spanning eight task families. A single open-weight model, \texttt{Qwen/Qwen3-4B-Instruct-2507}, was evaluated under three preregistered evidence modes: \texttt{output\_only}, \texttt{token\_top\_k}, and \texttt{full\_category\_evidence}. The model-execution host contained the answer-key-free model inputs and cryptographically pinned manifests, but not the trusted held-out references. Raw outputs were frozen before trusted scoring.

Across the three canonical runs there were $256\times3=768$ completed executions. The frozen scorer accepted zero model result records. The status distribution was
\[
369\ \texttt{safe\_parse\_reject}
\qquad\text{and}\qquad
399\ \texttt{safe\_schema\_reject},
\]
with the same per-mode split of 123 parse rejects and 133 schema rejects. Thus coverage was $0$, the invalid-output rate was $1.0$, all 768 executions were quarantined by the benchmark policy, and selective risk is undefined because no output entered the accepted set.

This is a result about the interface and one frozen model/configuration, not a general statement about LLM capability, healthcare safety, or the usefulness of richer evidence in other settings. In this run, additional model-side evidence access did not improve verified completion because failures occurred earlier at the parse/schema admissibility boundary. The experiment therefore illustrates the system-level distinction emphasized throughout this paper: successful neural inference is not equivalent to an admissible downstream action.

The public repository releases the architecture, aggregate scored results, figures, and provenance hashes while withholding hidden held-out references and raw held-out responses so that future blind evaluations remain possible \cite{boundarybench_repo}.

\section{Summary: hallucination containment and external validity}

The precise safety claim of the framework is not that a neural model is prevented from hallucinating internally. A model may still produce a high-confidence but unsupported
proposal. The claim precisely is:

\[
\boxed{
  \begin{tabular}{c}
    The framework prevents unsupported model outputs from being silently promoted into \\
    authoritative clinical facts or EHR writes.
  \end{tabular}
}
\]

Thus, the architecture provides \emph{hallucination containment and admission control}, though not hallucination elimination.

\subsection{Where hallucination is blocked}

The system has several gates. A hallucination may occur at the LLM or logit layer, but it must pass deterministic validity checks before it can affect clinical state.

\subsubsection{The logit layer is treated as evidence, not authority}

The model emits logits
\[
L(h)=Wh+b, \qquad p=\operatorname{softmax}(L/\tau).
\]

The output is not interpreted as:
\[
\text{``the model says this, therefore it is true.''}
\]
It is interpreted as:
\[
\text{``the model proposes a categorical evidence vector over a bounded decision.''}
\]

A safe logit receipt should include
\[
(L_1,\ldots,L_K), \qquad p_i=\frac{\exp(L_i/\tau)}{\sum_j \exp(L_j/\tau)}, \qquad H(p), \qquad \Delta=L_{(1)}-L_{(2)}, \qquad D_{\mathrm{KL}}(p_{\mathrm{full}}\|p_{\mathrm{truncated}}).
\]

High entropy, small margin, large tail loss, or cross-model disagreement means
\[
\text{do not commit; route to review.}
\]

The logit layer can expose uncertainty and instability, but it does not establish admissibility or external validity.

\subsubsection{The boundary algebra restricts what the model is allowed to say}

The model cannot emit arbitrary prose directly as an authoritative clinical assertion. It must map into a finite boundary
decision such as:
\[
\{
\text{exact},
\text{equivalent},
\text{narrower},
\text{broader},
\text{conflict},
\text{unmapped}
\}
\]
or,
\[
\{
\text{no allergy documented},
\text{confirmed allergy},
\text{unconfirmed allergy},
\text{refuted}
\}.
\]

This turns free-form generation into a typed proposal over a small categorical interface. Thus the first defense is:

\[
\boxed{
\text{No free-form clinical assertion can directly become a database update.}
}
\]

\subsubsection{The model proposal is checked against external validity}

The operational admission gate is the external-validity predicate
\[
EV:B\times W\to\{0,1\},
\]
where \(B\) is the finite boundary algebra and \(W\) is the measurable world/evidence
space.

Operationally, \(EV\) is a terminating Boolean procedure over authoritative evidence, not an
LLM judgment.

In the EHR setting,
\[
W =
\left(
\begin{array}{l}
\text{patient identity log},\\
\text{FHIR resources},\\
\text{signed terminology bundle},\\
\text{Provenance},\\
\text{AuditEvent},\\
\text{policy rules},\\
\text{human adjudications}
\end{array}
\right).
\]

A clinical claim \(b\in B\) is accepted only if
\[
EV(b,w)=1.
\]

This is an operational admissibility test relative to an institutionally defined evidence model; it is not a declaration of ground-truth medical certainty.

A more explicit semantics is:
\[
[[b]]_{\mathcal M}
\subseteq
\mathcal W\times\mathcal T\times\mathcal L\times\mathcal V\times\mathcal P,
\]
where,
\[
\mathcal W=\text{clinical evidence state},
\]
\[
\mathcal T=\text{time interval},
\]
\[
\mathcal L=\text{care setting, facility, or source system},
\]
\[
\mathcal V=\text{terminology version},
\]
and,
\[
\mathcal P=\text{provenance and authority chain}.
\]

The system accepts a proposed assertion only when its witness lies inside the interpretation:
\[
(w,t,\ell,v,p)\in [[b]]_{\mathcal M}.
\]

\subsection{What kind of hallucination this prevents}

The framework prevents many operationally dangerous hallucinations from becoming authoritative clinical state.

\begin{table}[h]
\centering
\begin{tabular}{p{0.36\linewidth}p{0.54\linewidth}}
\hline
\textbf{Hallucination type} & \textbf{How it is caught} \\
\hline
Nonexistent code &
Signed terminology bundle rejects it. \\ \addlinespace

Unsupported clinical assertion &
No matching FHIR resource or provenance witness exists, so \(EV=0\). \\ \addlinespace

Wrong patient &
The external identity predicate must reject an invalid or ambiguous match; the BFT log provides non-equivocation and auditability of the identity decision, not correctness of the match itself. \\ \addlinespace

Stale fact &
Temporal interval check fails. \\ \addlinespace

Free-text overreach &
NLP output lacks acceptable provenance or human signoff. \\ \addlinespace

Unsupported medication mapping &
Categorical mapping remains \texttt{unmapped}, \texttt{conflict}, or \texttt{unknown}. \\ \addlinespace

Internal inconsistency &
Sheaf or mapping-cone obstruction flags incompatible local sections. \\ \addlinespace

Overconfident model guess &
Entropy, margin, tail-loss, or cross-model checks trigger review. \\ \addlinespace

Unsafe commit &
FHIR transaction is not posted; an \texttt{OperationOutcome} and Provenance record the abstention. \\
\hline
\end{tabular}
\caption{Operational hallucination containment mechanisms.}
\end{table}

The central design invariant is:

\[
\boxed{
\text{The substrate emits typed certificates, not autonomous clinical decisions.}
}
\]

On precondition failure, it emits an \texttt{OperationOutcome}, records Provenance, and refuses authoritative write-back.

\subsection{What kind of hallucination it does not prevent}

The framework does not prevent a model from internally forming a false but coherent story. A model can still produce
\[
p(\text{equivalent mapping})=0.98
\]
for a bad mapping.

A model can still produce a fluent explanation. A model can still be wrong in a way that is internally consistent.

The architecture blocks that proposal only if the claim fails some external test:
\[
EV(b,w)=0,
\]
or creates a sheaf obstruction,
\[
[ds]\ne 0,
\]
or lacks provenance,
\[
\operatorname{Prov}(b)=\varnothing,
\]
or violates policy,
\[
\operatorname{Policy}(b)=0.
\]

Therefore,

\begin{center}
\fbox{\parbox{0.88\linewidth}{\centering
GBI/DCSE is not hallucination elimination. It is hallucination containment and admission control.}}
\end{center}

The following two examples are synthetic policy-gating illustrations, not medication recommendations or clinical decision support.

\subsection{Example: penicillin allergy}

Suppose an LLM proposes:
\[
\text{``Amoxicillin is safe for this patient.''}
\]

The framework does not ask whether the sentence sounds plausible. It atomizes the proposal into a boundary claim
\[
b=
\text{``amoxicillin order is compatible with active allergy profile.''}
\]

It then checks the authoritative state
\[
w =
\left\{
\begin{array}{l}
\text{Patient X identity accepted},\\
\text{AllergyIntolerance: penicillin allergy},\\
\text{clinicalStatus=active},\\
\text{verificationStatus=confirmed},\\
\text{criticality=high},\\
\text{RxNorm/SNOMED terminology bundle}
\end{array}
\right\} \cite{fda_amoxil}.
\]

If the sheaf consistency check finds conflict on the allergy--medication compatibility relation, the substrate emits a \texttt{business-rule} \texttt{OperationOutcome} and commits no clinical transaction.

The hallucination is not prevented at generation time. It is prevented from becoming a clinical write.

\subsection{Example: metformin without renal context \cite{metformin_label}}

Suppose the model says:
\[
\text{``The proposed metformin order is appropriate.''}
\]

The system asks:
\[
EV(\text{metformin order allowed},w)=1?
\]

If the version-pinned clinical policy requires a qualifying renal-function observation \cite{metformin_label} and no observation satisfying that policy's terminology and validity window is available, the necessary witness is missing.

The system emits a warning or required \texttt{OperationOutcome}, records the abstention, and does not commit the medication order.  Absence of evidence is not silently filled by the model.

\subsection{Admissibility as a layered operational criterion}

The substrate's acceptance criterion is operational:

\[
\operatorname{Admissible}_{\mathrm{GBI}}(b)
=
1
\quad\Longleftrightarrow\quad
\text{there exists an admissible witness }w\text{ such that }EV(b,w)=1.
\]

An admissible witness is not a logit. It is a bundle
\[
w=
\left(
\begin{array}{l}
\text{identity certificate},\\
\text{FHIR resource},\\
\text{terminology version},\\
\text{provenance signature},\\
\text{temporal interval},\\
\text{policy rule},\\
\text{optional human adjudication}
\end{array}
\right).
\]

The layers have distinct roles:
\[
\text{logits propose},
\]
\[
\text{Dirichlet evidence calibrates},
\]
\[
\text{sheaves check consistency},
\]
\[
\text{BFT checks non-equivocation},
\]
\[
\text{FHIR and Provenance check source authority},
\]
\[
\text{human review handles residual ambiguity}.
\]

No single layer establishes medical truth. The operational admission decision is the composition of external-validity predicates over institutionally authoritative records and policy.

\paragraph{The Proposal.}
The system does not claim to eliminate hallucinations inside neural models. A neural model may emit a high-confidence but unsupported proposal. The GBI/DCSE architecture prevents such proposals from becoming authoritative clinical state unless they are witnessed by the external validity predicate over signed identity, terminology, provenance, temporal, and policy objects. In this sense, the architecture provides hallucination containment: it converts unsupported model outputs into abstentions, review tasks, or \texttt{OperationOutcome} records, rather than FHIR commits.

\paragraph{Model-relative admissibility.}
Admissibility is model-relative. A boundary assertion \(b\) is admissible to the substrate only relative to a declared institutional model \(\mathcal M\) consisting of accepted identity facts, version-pinned
terminology, FHIR resources, temporal validity windows, provenance signatures, policy rules, and human adjudications. The substrate can be wrong if \(\mathcal M\) is wrong; it cannot repair corrupted source records or incomplete institutional policy. Its guarantee is traceability of accepted claims to declared admissible witnesses and policies, not that those witnesses perfectly mirror the world.

\appendix

\section{Complete, Executable Reference Implementation in Julia}\label{app:code}
The executable source is reproduced below and is also supplied as arXiv ancillary material in \texttt{anc/gbi\_dcse\_arxiv\_revised.jl}. Run it directly in a Julia environment. It requires only the standard \texttt{LinearAlgebra} and \texttt{Printf} libraries. The appendix implements numerical certificates and a toy projector-based stalk-obstruction surrogate; it is not the enterprise policy object \(\mathcal P\), a FHIR writer, an enclave, or a clinical decision system.

\begin{lstlisting}[language=Julia]
using LinearAlgebra
using Printf

# Numerical appendix for the DCSE/GBI paper.
#
# The paper treats neural/logit output as bounded evidence, not clinical truth.
# This script verifies the small numerical certificates used by that boundary:
# Dirichlet-Fisher conditioning, advisory chart distortion,
# higher-dimensional ellipsoid distortion, and a projector-based stalk-obstruction surrogate.
#
# It is not a FHIR writer, enterprise policy engine, enclave, or clinician.
# QUARANTINE_THRESHOLD is a toy appendix parameter, not the runtime policy P.
# The only commit/quarantine decision represented here is the toy stalk-level
# trace-energy gate used to illustrate localization.

const MAPPING_STATUS_LABELS = ("exact", "equivalent", "narrower", "broader", "conflict", "unmapped")
const FISHER_MIN_EIGENVALUE = 1e-6
const QUARANTINE_THRESHOLD = 0.80
const KTABULAR = 1.5
const PSD_TOL = 1e-10

function condensed_probe_demo()
    # Section 4 intuition: a condensed/operational probe is stricter than
    # ordinary pointwise convergence. The toy sequence 1/n converges, but it is
    # not eventually constant, so it fails the operational compatibility check.
    seq = [1.0 / i for i in 1:20]
    tail_variation = maximum(abs.(seq[end-4:end] .- seq[end]))
    return (
        eventually_constant = tail_variation < 1e-12,
        tail_variation = tail_variation,
    )
end

function entropy_bits(p::AbstractVector{<:Real})::Float64
    entropy = 0.0
    @inbounds for x in p
        xf = Float64(x)
        if xf > 0.0
            entropy -= xf * log2(xf)
        end
    end
    return entropy
end

function approx_trigamma(x::Float64)::Float64
    # Section 6 uses the Dirichlet Fisher metric
    # g_ij(alpha) = trigamma(alpha_i) * delta_ij - trigamma(sum(alpha)).
    # This recurrence plus asymptotic expansion avoids non-stdlib packages.
    x > 0.0 || throw(DomainError(x, "trigamma approximation requires x > 0"))

    y = x
    acc = 0.0
    while y < 8.0
        acc += 1.0 / (y * y)
        y += 1.0
    end

    inv_y = 1.0 / y
    inv2 = inv_y * inv_y
    inv3 = inv2 * inv_y
    inv5 = inv2 * inv3
    inv7 = inv2 * inv5
    inv9 = inv2 * inv7

    return acc + inv_y + 0.5 * inv2 + inv3 / 6.0 - inv5 / 30.0 +
           inv7 / 42.0 - inv9 / 30.0
end

function fisher_dirichlet(alpha::AbstractVector{<:Real})::Matrix{Float64}
    # The caller is expected to enforce the paper's evidence box alpha_i in
    # [phi, A]. This routine still rejects non-positive values so the metric
    # cannot be evaluated on the singular simplex boundary.
    k = length(alpha)
    k > 0 || throw(ArgumentError("alpha must be non-empty"))

    alpha_sum = 0.0
    @inbounds for a in alpha
        af = Float64(a)
        af > 0.0 || throw(DomainError(af, "Dirichlet alpha values must be > 0"))
        alpha_sum += af
    end

    base = -approx_trigamma(alpha_sum)
    metric = fill(base, k, k)
    @inbounds for i in eachindex(alpha)
        metric[i, i] += approx_trigamma(Float64(alpha[i]))
    end
    return metric
end

f_map(z::ComplexF64, theta::Float64)::ComplexF64 = z * z + theta * conj(z)

function wirtinger_exact(z::ComplexF64, theta::Float64)
    # Section 9 says the visual chart is advisory only. For the fixed chart
    # f(z; theta) = z^2 + theta * conj(z), the Wirtinger derivatives are exact:
    # df/dz = 2z and df/dconj(z) = theta.
    return 2.0 * z, ComplexF64(theta, 0.0)
end

function wirtinger_fd(z::ComplexF64, theta::Float64, h::Float64)
    # Kept as a validation fallback for the exact derivative above.
    fx = (f_map(z + h, theta) - f_map(z - h, theta)) / (2.0 * h)
    fy = (f_map(z + im * h, theta) - f_map(z - im * h, theta)) / (2.0 * h)
    return 0.5 * (fx - im * fy), 0.5 * (fx + im * fy)
end

function hyperellipsoid_certificate(A::AbstractMatrix{<:Real})
    # Section 8.1 certificate: Df maps infinitesimal spheres to ellipsoids.
    # Singular values provide H = sigma_max / sigma_min and
    # K_outer = sigma_max^n / det(A). Positive Jacobian is required.
    A64 = Matrix{Float64}(A)
    singular_values = svdvals(A64)
    sigma_min = minimum(singular_values)
    sigma_max = maximum(singular_values)
    jacobian = det(A64)

    sigma_min > 0.0 || throw(DomainError(sigma_min, "matrix must be full rank"))
    jacobian > 0.0 || throw(DomainError(jacobian, "matrix must have positive Jacobian"))

    return (
        singular_values = singular_values,
        axis_eccentricity = sigma_max / sigma_min,
        jacobian = jacobian,
        outer_distortion = sigma_max^size(A64, 1) / jacobian,
    )
end

function orthonormal_columns(M::AbstractMatrix{<:Real})::Matrix{Float64}
    # The trace energy E_sigma = tr(P_lambda P_sigma) is basis-invariant only
    # after the obstruction basis has been made orthonormal.
    M64 = Matrix{Float64}(M)
    rows, cols = size(M64)
    cols <= rows || throw(ArgumentError("matrix must have at least as many rows as columns"))

    factor = qr(M64)
    return Matrix(factor.Q)[:, 1:cols]
end

function appendix_raw_obstructions()
    # Appendix A's explicit 6x2 obstruction matrix.
    #
    # The row pairs model Allergy, MedicationRequest, and RenalLab stalks. The
    # After orthonormalization, this explicit matrix reproduces the Section 7.3
    # toy stalk energies to the displayed rounding. The routine below remains a
    # projector-based surrogate; it does not construct a mapping-cone differential.
    return [0.10 0.10;
            0.10 -0.10;
            1.00 1.00;
            1.00 -1.00;
            0.05 0.00;
            0.00 0.05]
end

function mapping_cone_certificate(
    raw_obstructions::AbstractMatrix{<:Real};
    threshold::Float64 = QUARANTINE_THRESHOLD,
)
    # Toy obstruction surrogate used by Sections 7-10. P_lambda projects onto
    # the span of the supplied obstruction vectors; each stalk energy is the trace
    # against a coordinate-stalk projector. This is not a mapping-cone Laplacian.
    U = orthonormal_columns(raw_obstructions)
    projector = U * transpose(U)
    laplacian = Matrix{Float64}(I, size(projector, 1), size(projector, 2))
    laplacian .-= projector

    stalk_ranges = (
        ("Allergy", 1:2),
        ("MedicationRequest", 3:4),
        ("RenalLab", 5:6),
    )
    energies = map(stalk_ranges) do (name, idxs)
        energy = sum(projector[i, i] for i in idxs)
        (name = name, energy = energy, decision = energy > threshold ? "QUARANTINE" : "COMMIT")
    end

    return (
        basis = U,
        projector = projector,
        laplacian = laplacian,
        eigenvalues = eigvals(Symmetric(laplacian)),
        energies = energies,
    )
end

function run_self_check()
    alpha = [2.0, 3.0, 4.0, 5.0]
    metric = fisher_dirichlet(alpha)
    @assert issymmetric(metric)
    @assert minimum(eigvals(Symmetric(metric))) > FISHER_MIN_EIGENVALUE

    boundary_alpha = [0.01, 3.0, 4.0, 5.0]
    boundary_metric = fisher_dirichlet(boundary_alpha)
    @assert minimum(eigvals(Symmetric(boundary_metric))) > FISHER_MIN_EIGENVALUE

    z = 1.0 + 1.0im
    theta = 0.35
    fz_exact, fzb_exact = wirtinger_exact(z, theta)
    fz_fd, fzb_fd = wirtinger_fd(z, theta, 1e-5)
    @assert isapprox(fz_exact, fz_fd; rtol = 1e-10, atol = 1e-10)
    @assert isapprox(fzb_exact, fzb_fd; rtol = 1e-10, atol = 1e-10)

    A = [1.20 0.10 0.0; 0.20 0.80 0.05; 0.0 0.10 1.10]
    cert = hyperellipsoid_certificate(A)
    @assert cert.axis_eccentricity >= 1.0
    @assert cert.outer_distortion >= 1.0

    raw = appendix_raw_obstructions()
    cone = mapping_cone_certificate(raw)
    @assert isapprox(cone.basis' * cone.basis, Matrix{Float64}(I, 2, 2); atol = 1e-12)
    @assert issymmetric(cone.laplacian)
    @assert minimum(cone.eigenvalues) >= -PSD_TOL

    angle = 0.73
    rotation = [cos(angle) -sin(angle); sin(angle) cos(angle)]
    rotated_projector = (cone.basis * rotation) * transpose(cone.basis * rotation)
    @assert norm(cone.projector - rotated_projector) <= 1e-12

    return true
end

function run_report()
    println("========================================================================")
    println("        GBI/DCSE MATHEMATICAL CO-PROCESSOR VERIFICATION RUN")
    println("========================================================================")

    probe = condensed_probe_demo()
    println("\n  Condensed operational probe:")
    println("    Sequence 1/n eventually constant? ", probe.eventually_constant)
    @printf("    Tail variation over last five terms: %.6f\n", probe.tail_variation)

    status_alpha = [1.0, 14.0, 1.0, 1.0, 3.0, 1.0]
    status_p = status_alpha ./ sum(status_alpha)
    println("\n  Local mapping-status evidence:")
    println("    Labels: ", MAPPING_STATUS_LABELS)
    println("    Posterior mean p: ", round.(status_p; digits = 4))
    @printf("    Entropy: %.4f bits\n", entropy_bits(status_p))

    alpha_a = [2.0, 3.0, 4.0, 5.0]
    alpha_b = [0.01, 3.0, 4.0, 5.0]
    for (label, alpha) in (("Interior", alpha_a), ("Boundary", alpha_b))
        metric = fisher_dirichlet(alpha)
        ev = eigvals(Symmetric(metric))
        @printf("\n  %s alpha: %s\n", label, alpha)
        println("    Eigenvalues: ", round.(ev; digits = 4))
        @printf("    Condition number: %.2f\n", maximum(ev) / minimum(ev))
    end

    z = 1.0 + 1.0im
    theta = 0.35
    fz, fzb = wirtinger_exact(z, theta)

    A = [1.20 0.10 0.0; 0.20 0.80 0.05; 0.0 0.10 1.10]
    ellipsoid = hyperellipsoid_certificate(A)
    println("\n  Scaling matrix A: ", A)
    println("    Singular values:     ", round.(ellipsoid.singular_values; digits = 4))
    @printf("    Axis eccentricity H: %.4f\n", ellipsoid.axis_eccentricity)
    @printf("    Jacobian det J:      %.4f\n", ellipsoid.jacobian)
    @printf("    Outer distortion Ko: %.4f\n", ellipsoid.outer_distortion)

    raw_obstructions = appendix_raw_obstructions()
    cone = mapping_cone_certificate(raw_obstructions)
    println("\n  Eigenvalues of cone Laplacian Lc: ", round.(cone.eigenvalues; digits = 4))
    println("  Stalk obstruction energies:")
    for item in cone.energies
        @printf("    %-18s energy = %.4f -> %s\n", item.name, item.energy, item.decision)
    end

    println("========================================================================")
end

function trigamma_kernel(values::AbstractVector{Float64})::Float64
    acc = 0.0
    @inbounds for x in values
        acc += approx_trigamma(x)
    end
    return acc
end

function wirtinger_exact_kernel(points::AbstractVector{ComplexF64}, theta::Float64)::Float64
    acc = 0.0
    @inbounds for z in points
        fz, fzb = wirtinger_exact(z, theta)
        acc += abs2(fz) + abs2(fzb)
    end
    return acc
end

function wirtinger_fd_kernel(points::AbstractVector{ComplexF64}, theta::Float64, h::Float64)::Float64
    acc = 0.0
    @inbounds for z in points
        fz, fzb = wirtinger_fd(z, theta, h)
        acc += abs2(fz) + abs2(fzb)
    end
    return acc
end

function benchmark_call(label::String, f::Function, reps::Int; ops_per_rep::Int = 1)
    f()
    GC.gc()
    last_result = nothing
    guarded_f = Base.inferencebarrier(f)
    elapsed = @elapsed begin
        for _ in 1:reps
            last_result = Base.inferencebarrier(guarded_f())
        end
    end
    @printf(
        "%-34s %10.3f us/op  (%d reps, %d ops/rep)\n",
        label,
        elapsed * 1e6 / (reps * ops_per_rep),
        reps,
        ops_per_rep,
    )
    return last_result
end

function run_benchmarks()
    println("Benchmarks exclude Julia startup/compilation cost and use Base timing only.")

    alpha = [2.0, 3.0, 4.0, 5.0]
    theta = 0.35
    A = [1.20 0.10 0.0; 0.20 0.80 0.05; 0.0 0.10 1.10]
    raw = appendix_raw_obstructions()
    trigamma_values = collect(range(0.25, 12.0; length = 512))
    points = ComplexF64[
        ComplexF64(cos(t), sin(t)) for t in range(0.1, 2.9; length = 512)
    ]

    benchmark_call(
        "approx_trigamma",
        () -> trigamma_kernel(trigamma_values),
        10_000;
        ops_per_rep = length(trigamma_values),
    )
    benchmark_call("fisher_dirichlet", () -> fisher_dirichlet(alpha), 200_000)
    benchmark_call(
        "wirtinger_exact",
        () -> wirtinger_exact_kernel(points, theta),
        50_000;
        ops_per_rep = length(points),
    )
    benchmark_call(
        "wirtinger_fd",
        () -> wirtinger_fd_kernel(points, theta, 1e-5),
        50_000;
        ops_per_rep = length(points),
    )
    benchmark_call("hyperellipsoid_certificate", () -> hyperellipsoid_certificate(A), 100_000)
    benchmark_call("mapping_cone_certificate", () -> mapping_cone_certificate(raw), 100_000)
end

function main(args = ARGS)
    run_self_check()
    if "--benchmark" in args
        run_benchmarks()
    else
        run_report()
    end
end

if abspath(PROGRAM_FILE) == @__FILE__
    main()
end
\end{lstlisting}

\section{Validation, System Metrics, and Operational Boundaries}

\subsection{Verification Protocols}
To ensure the DCSE-GBI co-processor is deployed safely, engineering teams must execute a three-tiered validation plan before writing back any transactions to a production EHR:

\begin{enumerate}
    \item \textbf{Mathematical Validation:}
    \begin{itemize}
        \item \emph{Assertion 1:} The Boolean-algebra implementation should be stress-tested over at least $2^{16}$ randomized join/meet/complement operations, including atom-disjointness and closure checks. This is an implementation test, not a proof of the algebraic specification.
        \item \emph{Assertion 2:} For a declared evidence box $E_{K,\epsilon,A}$, numerical validation should sweep corners and adversarial near-boundary cases and require $\lambda_{\min}(I(\alpha))$ to exceed an implementation tolerance such as $10^{-6}$. The tolerance is an engineering criterion, not a theorem that follows from a finite sample alone.
        \item \emph{Assertion 3:} When an actual mapping-cone Laplacian is constructed, the implementation should verify symmetry and positive semi-definiteness to numerical tolerance. Trace-based stalk energies should also be checked under randomized orthogonal basis rotations; the toy appendix demonstrates basis invariance for its projector surrogate, not a full mapping-cone implementation.
    \end{itemize}
    \item \textbf{Systems Validation:}
    \begin{itemize}
        \item \emph{Attestation Verification:} Deploy the system in a staging environment and inject invalid, expired, or modified remote attestation claims \cite{intel_sgx_attestation}. Verify that invalid or stale attestation causes the governed write path to fail closed. Key revocation should be tested only in deployments whose key-management design actually uses attestation-bound database keys.
        \item \emph{Consensus Fault Injector:} Simulate network partition and validator crash faults to test the liveness and safety invariants of the non-equivocating Byzantine state logs. For any selected BFT protocol, test the protocol-specific resilience and quorum conditions. Under the usual classical $3f+1$ model, tolerating $f$ Byzantine faults requires the active replica population to satisfy $n\ge3f+1$ \cite{pbft,bedrock_bft}; falling below the required threshold must halt authoritative writes.
        \item \emph{Rollback Conformance:} Inject a corrupted medication order nested within a multi-resource FHIR transaction bundle. Verify that the target FHIR gateway rejects the transaction atomically and returns an appropriate \texttt{OperationOutcome}; capture the failed attempt in the deployment's audit/provenance path according to policy. Unrelated resources outside the submitted transaction should remain unchanged.
    \end{itemize}
    \item \textbf{Clinical Validation:}
    \begin{itemize}
        \item \emph{Retrospective Playback:} A future clinical validation program should preregister a sufficiently powered retrospective or shadow-mode cohort, with expert adjudication and subgroup analysis, before any clinical deployment claim. A nominal target such as $10{,}000$ encounters may be useful for planning, but sample size must be justified by prevalence, target confidence intervals, and the intended claims rather than fixed by this manuscript.
    \end{itemize}
\end{enumerate}

\subsection{Performance and Reliability Metrics}
The following values are proposed engineering targets for validation and benchmarking; they are not presented here as externally certified thresholds or empirically established safety guarantees:

\begin{table}[htbp] 
\centering 
\caption{Proposed Operational Performance and Reliability Targets} 
\label{tab:metrics} 
\small 
\begin{tabularx}{\textwidth}{l l c X} 
\toprule 
\textbf{Metric Group} & \textbf{Specific Measure} & \textbf{\begin{tabular}[c]{@{}c@{}}Proposed\\ Baseline\end{tabular}} & \textbf{Testing Methodology} \\ 
\midrule 
Mathematical & Spectral Gap ($\lambda_1 - \lambda_0$) & $\ge 0.15$ & SVD spectrum sweep on $L_C$ \\ 
\addlinespace 
Mathematical & Fisher Matrix Condition Number & $\le 10^4$ & SVD evaluation on $E_{K,\epsilon,A}$ \\ 
\addlinespace 
Systems & End-to-End Latency (Enclave) & $\le 150 \text{ ms}$ & Benchmarking from FHIR request to response \\ 
\addlinespace 
Systems & Attestation Bootstrapping Time & $\le 2.5 \text{ s}$ & TEE/IAS handshake latency measurement \\ 
\addlinespace 
Clinical & Severe Contradiction Sensitivity & $100\%$ & Golden-standard retrospective chart injection \\ 
\addlinespace 
Clinical & False Conflict Adjudication Rate & $\le 4\%$ & Shadow-mode user experience trial \\ 
\bottomrule 
\end{tabularx} 
\end{table}

\subsection{Strategic Boundaries and Fail-Closed Constraints}
The DCSE-GBI framework is a fail-closed clinical co-processor, not an autonomous practitioner. It is designed to safely restrict data transitions rather than deduce patient care paths. Operational teams must strictly enforce the following execution limits:
\begin{itemize}
    \item \textbf{Boundary 1:} The co-processor must immediately halt and deny write permissions if patient identity resolution yields an ambiguous match, regardless of demographic similarity scores.
    \item \textbf{Boundary 2:} Unsigned, unpinned, or stale terminology updates must automatically trigger an administrative freeze, preventing clinical lookups from executing under obsolete mapping semantics.
    \item \textbf{Boundary 3:} The visual audit chart's linear distortion coefficient $K$ acts solely as a notification tool to assist manual human review. High visual distortion must never be interpreted as proof of logical inconsistency. A quarantine decision must follow the declared deterministic policy over validated identity, terminology, provenance, temporal, dependency, and consistency checks; mapping-cone-derived energies may be one input only when an actual cone construction has been implemented and validated.
\end{itemize}


\begin{thebibliography}{99}


\bibitem{scholze_condensed}
D. Clausen and P. Scholze,
\emph{Condensed Mathematics and Complex Geometry},
arXiv:2605.11731, 2026.
\url{https://arxiv.org/abs/2605.11731}.

\bibitem{scholze_lectures}
P. Scholze,
\emph{Lectures on Condensed Mathematics},
arXiv:2605.03658, 2026.
\url{https://arxiv.org/abs/2605.03658}.

\bibitem{reshetnyak_distortion}
Y. G. Reshetnyak,
\emph{Space Mappings with Bounded Distortion},
Translations of Mathematical Monographs,
American Mathematical Society, 1989.

\bibitem{vaisala}
J. V\"ais\"al\"a,
\emph{Lectures on $n$-Dimensional Quasiconformal Mappings},
Lecture Notes in Mathematics, vol. 229,
Springer, 1971.


\bibitem{hansen_ghrist_spectral}
J. Hansen and R. Ghrist,
``Toward a spectral theory of cellular sheaves,''
\emph{Journal of Applied and Computational Topology},
vol. 3, pp. 315--358, 2019.
doi:10.1007/s41468-019-00038-7.

\bibitem{weibel_homological}
C. A. Weibel,
\emph{An Introduction to Homological Algebra},
Cambridge University Press, 1994.


\bibitem{pbft}
M. Castro and B. Liskov,
``Practical Byzantine Fault Tolerance,''
in \emph{Proceedings of the 3rd Symposium on Operating
Systems Design and Implementation (OSDI '99)},
USENIX Association, pp. 173--186, 1999.
\url{https://www.usenix.org/conference/osdi-99/practical-byzantine-fault-tolerance}.

\bibitem{minbft}
G. S. Veronese, M. Correia, A. N. Bessani,
L. C. Lung, and P. Verissimo,
``Efficient Byzantine Fault-Tolerance,''
\emph{IEEE Transactions on Computers},
vol. 62, no. 1, pp. 16--30, 2013.
doi:10.1109/TC.2011.221.

\bibitem{bedrock_bft}
M. J. Amiri, C. Wu, D. Agrawal, A. El Abbadi,
B. T. Loo, and M. Sadoghi,
\emph{The Bedrock of Byzantine Fault Tolerance:
A Unified Platform for BFT Protocol Design and Implementation},
arXiv:2205.04534, 2022.
\url{https://arxiv.org/abs/2205.04534}.

\bibitem{zhao_bythos}
Q. Zhao, G. P\^{\i}rlea, K. Grzeszkiewicz,
S. Gilbert, and I. Sergey,
``Compositional Verification of Composite Byzantine Protocols,''
in \emph{Proceedings of the 2024 ACM SIGSAC Conference
on Computer and Communications Security (CCS '24)},
pp. 34--48, 2024.
doi:10.1145/3658644.3690355.

\bibitem{bythos_artifact}
Q. Zhao, G. P\^{\i}rlea, K. Grzeszkiewicz,
S. Gilbert, and I. Sergey,
\emph{Bythos: Compositional Verification of Composite
Byzantine Protocols---Research Artifact},
Zenodo, 2024.
doi:10.5281/zenodo.12787570.


\bibitem{intel_sgx}
Intel Corporation,
\emph{Intel Software Guard Extensions (Intel SGX)},
Developer Documentation and Overview,
accessed August 8, 2026.
\url{https://www.intel.com/content/www/us/en/developer/tools/software-guard-extensions/overview.html}.

\bibitem{intel_sgx_attestation}
Intel Corporation,
\emph{Attestation Services for Intel Software Guard Extensions},
accessed August 8, 2026.
\url{https://www.intel.com/content/www/us/en/developer/tools/software-guard-extensions/attestation-services.html}.

\bibitem{intel_sgx_pse}
Intel Corporation,
``What Is the Role of the Intel Software Guard Extensions
Platform Services Enclave (PSE) and How Is It Invoked?''
Intel Support Article 000058691,
reviewed August 6, 2021.
\url{https://www.intel.com/content/www/us/en/support/articles/000058691/software/intel-security-products.html}.


\bibitem{fhir_r4}
HL7 International,
\emph{FHIR Release 4: RESTful API}.
\url{https://hl7.org/fhir/R4/http.html}.

\bibitem{fhir_provenance}
HL7 International,
\emph{FHIR Release 4: Provenance Resource}.
\url{https://hl7.org/fhir/R4/provenance.html}.

\bibitem{fhir_outcome}
HL7 International,
\emph{FHIR Release 4: OperationOutcome Resource}.
\url{https://hl7.org/fhir/R4/operationoutcome.html}.

\bibitem{smart_launch}
HL7 International,
\emph{SMART App Launch Implementation Guide},
version 2.2.0.
\url{https://hl7.org/fhir/smart-app-launch/}.


\bibitem{hipaa_audit_controls}
U.S. Department of Health and Human Services,
``Audit Controls,''
45 C.F.R. \S 164.312(b) (2026).
\url{https://www.ecfr.gov/current/title-45/subtitle-A/subchapter-C/part-164/subpart-C/section-164.312}.


\bibitem{lee_safety_receipt_patent}
Y. B. Lee,
\emph{Safety Receipt Layer for Permit-Before-Action Gating
of AI-Assisted Clinical Decision Support Interventions},
U.S. Patent No. 12,633,414 B1,
May 19, 2026.


\bibitem{fda_amoxil}
U.S. Food and Drug Administration,
\emph{AMOXIL (Amoxicillin): Prescribing Information},
NDA 050542, Supplement 032, 2024.
\url{https://www.accessdata.fda.gov/drugsatfda_docs/label/2024/050542s032lbl.pdf}.

\bibitem{metformin_label}
U.S. National Library of Medicine,
\emph{DailyMed: Metformin Hydrochloride Tablets,
Full Prescribing Information},
current U.S. labeling, accessed August 8, 2026.
\url{https://dailymed.nlm.nih.gov/dailymed/drugInfo.cfm?setid=59a4ccda-5487-4ac4-9373-0c0fc8b7c88b}.


\bibitem{ihs_modernization}
Indian Health Service,
\emph{PATH EHR, IHS Health Information Technology Modernization Program},
accessed August 10, 2026.
\url{https://www.ihs.gov/HIT/path-ehr/}.

\bibitem{ihs_rpms}
Indian Health Service,
\emph{RPMS EHR Technical Overview},
accessed August 10, 2026.
\url{https://www.ihs.gov/ehr/technicaloverview/}.


\bibitem{boundarybench_repo}
A. Spivey,
\emph{GBI BoundaryBench v0.1: Programmatic Verification of Legacy-EHR Transformations},
public research showcase and aggregate results, 2026.
\url{https://github.com/AlvinSpivey/GBI-BoundaryBench}.


\bibitem{jordan_fokker}
R. Jordan, D. Kinderlehrer, and F. Otto,
``The variational formulation of the Fokker--Planck equation,''
\emph{SIAM Journal on Mathematical Analysis},
vol. 29, no. 1, pp. 1--17, 1998.

\bibitem{koopman}
B. O. Koopman,
``Hamiltonian systems and transformation in Hilbert space,''
\emph{Proceedings of the National Academy of Sciences},
vol. 17, no. 5, pp. 315--318, 1931.

\bibitem{ib}
N. Tishby, F. C. Pereira, and W. Bialek,
``The information bottleneck method,''
in \emph{Proceedings of the 37th Annual Allerton Conference
on Communication, Control, and Computing},
1999.

\bibitem{tda}
H. Edelsbrunner and J. Harer,
\emph{Computational Topology: An Introduction},
American Mathematical Society, 2010.


\bibitem{superposition}
N. Elhage et al.,
``Toy Models of Superposition,''
\emph{Transformer Circuits Thread}, Anthropic, 2022.

\bibitem{logitlens}
Nostalgebraist,
``Interpreting GPT: The Logit Lens,''
2020.

\end{thebibliography}
\end{document}